\newtheorem{lem}{Lemma}
\newtheorem{thm}{Theorem}
\author{
  Xiaojin Zhang,\textsuperscript{\rm 1}
  Shuai Li,\textsuperscript{\rm 2}
  Weiwen Liu,\textsuperscript{\rm 1}
  Shengyu Zhang \textsuperscript{\rm 3}\\
  \textsuperscript{\rm 1}The Chinese University of Hong Kong,
  \textsuperscript{\rm 2}Shanghai Jiao Tong University,
  \textsuperscript{\rm 3}Tencent\\
  xjzhang@cse.cuhk.edu.hk,
  lishuai.sherry@gmail.com,
  wwliu@cse.cuhk.edu.hk,
  shengyuzhang@gmail.com}
\title{Contextual Combinatorial Conservative Bandits}
\begin{document}

%

%

\maketitle

\begin{abstract}
	The problem of multi-armed bandits (MAB) asks to make sequential decisions while balancing between exploitation and exploration, and have been successfully applied to a wide range of practical scenarios. Various algorithms have been designed to achieve a high reward in a long term. However, its short-term performance might be rather low, which is injurious in risk sensitive applications. Building on previous work of conservative bandits, we bring up a framework of contextual combinatorial conservative bandits. An algorithm is presented and a regret bound of $\tilde O(d^2+d\sqrt{T})$ is proven, where $d$ is the dimension of the feature vectors, and $T$ is the total number of time steps. We further provide an algorithm as well as regret analysis for the case when the conservative reward is unknown. Experiments are conducted, and the results validate the effectiveness of our algorithm.

	



\end{abstract}

\section{Introduction}

The problem of multi-armed bandits has been extensively studied and drawn lot of attention in the past decades \cite{bubeck2012regret}. In canonical stochastic multi-armed bandit problem, the learner is presented with a set of arms, whose rewards are independently and identically distributed. The learner is allowed to select one arm at each round, and the final goal is to maximize the cumulative rewards. The key challenge of the bandit problem lies in the exploitation-exploration trade-off. On the one hand, the historical decisions and observed rewards could be exploited to find the arm with the highest empirical mean reward so far; on the other hand, the learner could explore other arms to get a better estimate of their mean value, which helps to achieve a large cumulative reward in a long run. 

In some practical applications, the learning agent needs to select a \emph{set} of arms at each round, instead of just one arm. For example, a company has newly hired several employees, and a certain amount of tasks are awaited to be completed. These employees could all deal with these tasks, but the completion qualities might be distinct. Actually, the rewards gained by the employees dealing with various tasks follow some specific distributions, while the company has no prior information about the distributions. The company could adjust the assignment policy based on the observations of the employees' performance, in hope of allocating the tasks wisely to maximize the cumulative revenue. This kind of problem fits into the framework of \emph{combinatorial bandits}. Each employee-task pair could be regarded as an arm, the allocation policy is an action, and the feasible action set contains allocations which could form a matching.

Contextual bandit problem is another extended version of the traditional multi-armed bandit problem. It is supposed that the agent could observe certain contexts before making decisions at each round. Besides, the learner is aware of a hypothesis class, which could map contexts to arms and assist the learner in finding the arm with the highest reward. Indeed, the contextual bandit are sometimes referred to as partial-label, associative bandit, multi-armed bandit with expert advice and associative reinforcement learning \cite{beygelzimer2011contextual}. The stochastic contextual bandit has been widely applied in lots of areas including online advertisement selection and news recommendation \cite{chapelle2011empirical,yue2011linear,tang2013automatic,yu2016linear,wang2017efficient}, automated vaccine design and sensor management \cite{krause2011contextual}, influence maximization \cite{fang2014networked,vaswani2017diffusion}, and offline evaluator construction \cite{li2011unbiased}.

Most algorithms proposed for either contextual bandit or combinatorial bandit aim at pursuing high reward, but usually ignore the instantaneous safety guarantee. For some risk sensitive applications such as financial investment, each action should be rather cautious to avoid the possibility that the cumulative return is below certain threshold at some rounds. As for the widely used Upper Confidence Bound (UCB) strategy, it follows the principle that more explorations are in need especially when information is scarce, thus the policy is prone to explore the arms which have not been sufficiently sampled. From this point of view, this strategy is very effective for not missing any arm that might later have excellent performance. However, it is highly likely to choose some arms with low expected rewards but high variances at an early stage of the algorithm, resulting in a great amount of cumulative loss at some point of time. Similar concerns were recently addressed in \cite{wu2016conservative}, in which they introduced the conservative bandit problem and gave an algorithm  that satisfied the revenue constraint uniformly in time under traditional bandit setting. Successively, \cite{kazerouni2016conservative} considered safety control in stochastic contextual bandit and assumed that the reward of each arm is linear with its feature vector. 

In this paper, we make the following contributions. (1) We generalize the revenue constraint to a more widely used contextual combinatorial setting, obtaining a novel framework of \emph{contextual combinatorial conservative bandits}. At each round, a set of arms could be selected, with the number of arms not exceeding a specified number. The learner could only observe the rewards of the played arms, which is referred to as \emph{semi-bandit feedback}. Viewing that the reward of a set of arms is not purely in the linear combination form in most application scenarios, the reward function is generalized to non-linear case, satisfying two mild assumptions. (2) We then propose two algorithms, one for the case when the expected reward of the conservative arm set is prescribed, and one for the case when it is unknown.  Both algorithms aim at maximizing the expected reward in the long run, while simultaneously satisfying the revenue constraint for each instant round. Importantly, our algorithm is computationally more efficient as compared with \textit{CLUCB} proposed by \cite{kazerouni2016conservative} in the contextual conservative setting. (3) We provide detailed theoretical analysis for both algorithms, showing that the safety could be well maintained even in complex environments. Specifically, we consider the case when the dimension of the contextual information is rather large. With well-designed lower bound of the revenue constraint in the contextual combinatorial setting, we provide novel proofs and obtain a tighter bound than that presented in \cite{kazerouni2016conservative} in terms of high-dimensional contextual information, making it more practical for high-dimensional contextual scenarios. (4) We conduct several experiments by numerical simulation, the results are consistent with our theoretical analysis and confirm the advantage of our algorithm in terms of maintaining the cumulative reward above the specified safety threshold.


\section{Related Work}
Multi-armed bandit was initially proposed by \cite{robbins1985some}. In the traditional multi-armed bandit problem, only one arm is allowed to be selected at each round. \cite{anantharam1987asymptotically} firstly extended the setting and assumed that multiple arms could be selected. However, the number of arms that could be selected at each round is fixed and constant. \cite{gai2012combinatorial} generalized this setting, allowing any constraint on weights with regard to the selected arms, then a line of work associated with combinatorial bandit \cite{chen2013combinatorial,chen2014combinatorial,lin2014combinatorial,kveton2015tight,chen2016combinatorial} were developed successively.

For many application scenarios like Internet advertisement selection and multi-modal message generation, contextual bandit turns out to be more suitable than the traditional bandit \cite{langford2008epoch}. Different from semi-bandit feedback in which the learner could only observe the reward of the selected arm, the features of all arms are available in the contextual setting, which could assist the learner in estimating the rewards of the arms that are not selected and thus improve the performance. Consequently, a line of research considers the contextual bandit problem, typically under linear realizability assumption \cite{filippi2010parametric,li2010contextual,abbasi2011improved,chu2011contextual}, where the expectation of each arm's expected reward is a linear function of the features. To make the setting more applicable to online recommendation, \cite{qin2014contextual} developed a novel framework called contextual combinatorial bandits and proposed an effective algorithm based on the upper confidence bound strategy.

In the traditional multi-armed bandits, the upper confidence bound strategy is not only intuitively sensible but also has been proven to be efficient and asymptotically optimal \cite{lai1985asymptotically,agrawal1995sample,auer2002finite}. This strategy tends to follow the principle of \textit{optimism in face of uncertainty} and  ignores the risk it might face from a pessimistic point of view. However, safety insurance is the first priority in some areas including those related to health and finance. Recently, \cite{wu2016conservative} paid attention to safety guarantee in the process of pursuing high rewards in the traditional multi-armed bandit setting, and proposed a conservative algorithm based on the upper confidence bound strategy. The algorithm maintains the constraint that the cumulative reward must not be less than a specific percentage of the reward gained by a conservative arm uniformly over time. Subsequently, \cite{kazerouni2016conservative} considered the revenue constraint in contextual bandits, under the assumption that the reward is linear and proposed an algorithm called \textit{conservative linear UCB} (abbreviated as \textit{CLUCB}).  In order to calculate the confidence bounds of expected reward of each arm, it needs to search within a confidence set containing an infinite number of elements, while our algorithm could provide the bounds directly, leading to a great improvement in terms of the efficiency of the algorithm.

\section{Problem Formulation}
In the basic contextual combinatorial setting, the learner could select a subset of arms from a set $E = \{1,2,\dots, M\}$ of  $M$ base arms at each round, also referred to as a \textit{super arm}, subject to certain constraints on the selected arms. Suppose the maximum number of chosen items at each round is $K$, then the set of \textit{feasible actions} is the set of super arms with size less than or equal to $K$, denoted by $\Theta^K$. At round $t$, each arm $e\in E$ is associated with a feature vector $x_{t,e} \in\mathbb{R}^{d}$, and the weight $w_{t,e}$ could be represented as $w_{t,e}=\theta^{T}_{*}x_{t,e}+\epsilon_{t,e}$,
where $\theta_{*}\in\mathbb{R}^{d}$ is a fixed but unknown parameter, and $\epsilon_{t,e}$ is a random noise with zero mean. We take a standard assumption about the bound of length that $\Vert\theta_{*}\Vert_2\le\uppercase{S}$ and $\Vert{x}_{t,e}\Vert^{2}_2\le\uppercase{L}$, for all $t$ and all $e\in E$. Besides, the random noise is assumed to be conditionally 1-sub-Gaussian, i.e., for any $\gamma \in \mathbb{R}$, 
\begin{align}
\mathbb{E} [\exp(\gamma\epsilon_{t,e}) | \{A_{1:t-1}, w_{1:t-1},x_{1:t}\}] \le \exp(\gamma^{2}/2).
\end{align}
Since the expectation of the random noise is zero, the expectation of the weight $w_{t,e}$ is $\theta^{T}_{*}x_{t,e}$, represented as $w^{*}_{t,e}$.  

After choosing a super arm $A_{t}$ at round $t$, the learning agent observes the weights of the arms in $A_t$, and the expected reward of $A_{t}$ is $f(A,\mbox{\boldmath$w$}^*_t) = \tilde{f}(\mbox{\boldmath$w$}^*_t|_A)$, where $\mbox{\boldmath$w$}|_A = (w_e)_{e\in A}$ is a $|A|$-dimensional vector, $\mbox{\boldmath$w$}^{*}_{t}=(w^{*}_{t,e})_{e\in E}$, and $\tilde{f}$ is a function satisfying the following two natural properties.
\begin{itemize}
	\item \textbf{Monotonicity}\; The function $\tilde{f}(w)$ is non-decreasing with respect to $\mbox{\boldmath$w$}$, i.e. $\tilde{f}(\mbox{\boldmath$w$})\le \tilde{f}(\mbox{\boldmath$w$}')$ if $w_{e} \le w'_{e}$ for all $e$.
	\item \textbf{Lipschitz Continuity}\; For any two vectors $w$ and $w'$, we have $|\tilde{f}(\mbox{\boldmath$w$}) - \tilde{f}(\mbox{\boldmath$w$}')|\le P\|\mbox{\boldmath$w$}-\mbox{\boldmath$w$}'\|_2$. 
\end{itemize}

When it comes to the conservative setting, there is also a set of $|A_{0}| (|A_{0}|\le K)$ other \footnote{Here we assume that $E\cap A_0 = \emptyset$ mainly for the ease of conceptual understanding, but it is not hard to see that our algorithms work in the case of $E\cap A_0 \neq \emptyset$ as well---one just does not need to estimate the upper bound and lower bound of the expected reward for any action contained in both sets.} base arms $A_0 =\{M+1, \dots, M+|A_{0}|\}$, referred to as the conservative action, whose expected weight is also given by $w_{t,e}^* = \theta^{T}_{*}x_{t,e}$. In terms of the conservative perspective of the learner, the cumulative reward should not be less than a certain fraction of the reward gained by simply choosing the conservative arm set $A_0$ at each round, referred to as the \textit{revenue constraint}: 
\begin{align}
\sum\limits_{s=1}^{t}f(A_{s},\mbox{\boldmath$w$}^{*}_{s})\ge (1-\alpha)f(A_{0},\mbox{\boldmath$w$}_{s}^{*})t, \quad \forall t\in [T]. \label{constraint}
\end{align}

Here $A_{0}$ is the action selected by the conservative policy that our algorithm aims to out-perform, and its expected reward is $\mu_0 := f(A_{0},\mbox{\boldmath$w$}_{0}^{*})$. The parameter $\alpha$ determines how conservative the agent should be, where $\alpha\in(0,1)$. Denote by $A^{*}_{t}=\operatorname{argmax}_{\uppercase{A}\in\Theta^K}f(A,\mbox{\boldmath$w$}^{*}_{t})$, where $\Theta^K = A_0\cup\{A\subseteq E: |A|\le K\}$ is the set of feasible actions. We assume that $f(A,\mbox{\boldmath$w$}^{*}_{t})\ge 0$ for any action $A\in \Theta^{K}$, and there exist $\Delta_{min}$ and $\Delta_{max}$, such that $\Delta_{min}\le$ $f(A^{*}_{t},\mbox{\boldmath$w$}^{*}_{t})-\mu_0\le\Delta_{max}$ and $\alpha\mu_{0}+\Delta_{min}>0$. 



The goal of the learner is to minimize the pseudo-regret, which is defined as
\begin{align}
R(T)=\sum\limits_{t=1}^{T}\left[f(A^{*}_{t},\mbox{\boldmath$w$}^{*}_{t})-f(A_{t},\mbox{\boldmath$w$}^{*}_{t})\right].
\end{align}

\section{Algorithms}
We propose two algorithms based on the upper confidence bound strategy, to solve the contextual combinatorial bandit problem with the revenue constraint, in both cases when the conservative reward is prescribed and when it is unknown. Note that for the second case, the conservative arm set $A_0$ is prescribed while its expected reward is unknown.

\begin{algorithm}[h]
	\caption{CCConUCB with Conservative Reward}
	\label{alg:Contextual Combinatorial Conservative Bandits}
	\begin{algorithmic}[1]
		\STATE{\textbf{Input: }$\alpha,\Theta^K,\mu_0$}
		\STATE{\textbf{Initialization: }$\hat{\theta}_{0}\leftarrow 0_{d\times 1},V_{0}\leftarrow \lambda I,Y_{0} \leftarrow 0_{d\times 1}$, $N_0 = D_0 = \emptyset$, and $H_{0}\leftarrow \sqrt{\lambda}\uppercase{S}+\sqrt{\log(1/\delta^{2})}$}

		\FOR{$t\leftarrow1,2,...,T$}
		\STATE{$f(A_0,U_{t,t})\leftarrow\mu_{0}, f(A_0,L_{t,t})\leftarrow\mu_{0}$}\label{alg1_mu_0_u_b}
		\FOR{$e\leftarrow1,2,...,M$}\label{alg1_upstart}
		\STATE{$U_{t,t,e}\leftarrow\hat{\theta}^\mathrm{T}_{t-1} x_{t,e}+ H_{t-1}\Vert{x}_{t,e}\Vert_{V_{t-1}^{-1}}$}
		\STATE{$L_{t,t,e}\leftarrow \max\{0,\hat{\theta}^\mathrm{T}_{t-1} x_{t,e}- H_{t-1}\Vert{x}_{t,e}\Vert_{V_{t-1}^{-1}}\}$}
		\ENDFOR\label{alg1_upend}
		\IF { $f(A_0,U_{t,t}) > \max_{A \in \Theta^K\setminus A_0} f(A,U_{t,t})$ }
		\STATE $B_t = A_0$
		\ELSE
		\STATE $B_t = \text{arg}\max_{A \in \Theta^K\setminus A_0} f(A,U_{t,t})$
		\ENDIF
		
		\FOR{$n\in\uppercase{N}_{t-1}$ and $e\in\uppercase{A}_{n}$}
		    \STATE{$L_{t,n,e}\leftarrow \max\{0,\hat{\theta}^\mathrm{T}_{t-1}x_{n,e}-H_{t-1}\Vert{x}_{n,e}\Vert_{V_{t-1}^{-1}}\}$}\label{alg1_lowerbound}
		\ENDFOR
		\STATE{$\psi_{t}\leftarrow \sum\limits_{n\in\uppercase{N}_{t-1}}f(A_{n},L_{t,n})+f(B_{t},L_{t,t})+| D_{t-1}|\mu_{0}$}
		\IF{$\psi_{t}\ge(1-\alpha)t\mu_0$}  \label{alg1_if}
		  \STATE{$A_t\leftarrow\uppercase{B}_t,N_t\leftarrow N_{t-1}\cup \{t\}$}\label{alg1_bt}
		  \STATE{Observe $w_{t,e}$ for all $e\in\uppercase{A}_{t}$}
		  \STATE{$V_t\leftarrow\uppercase{V}_{t-1}+\sum_{e\in\uppercase{A}_{t}}{x_{t,e}x^\mathrm{T}_{t,e}}$}
		  \STATE{$Y_t\leftarrow\uppercase{Y}_{t-1}+\sum_{e\in\uppercase{A}_{t}}w_{t,e}x_{t,e}$}
		  \STATE{$\hat{\theta}_{t}\leftarrow V^{-1}_tY_t$}
		  \STATE{$H_{t}\leftarrow\sqrt{\lambda}\uppercase{S}+\sqrt{\log\left(\det(\uppercase{V}_{t})/\left(\lambda^{d}\delta^{2}\right)\right)}$}\label{update_ht}
		\ELSE \label{alg1_else}
		  \STATE{$A_t\leftarrow A_{0},t\leftarrow D_{t-1}\cup \{t\}$}\label{alg1_a0}
		\ENDIF
		\ENDFOR
	\end{algorithmic}
\end{algorithm}

\subsection{Learning with Conservative Reward Known}


The contextual combinatorial conservative bandits aimed at solving two key issues. One is to maximize the cumulative reward, the other is to guarantee the conservative constraint. For the first issue, we use the effective UCB approach, which selects the arms based on the upper confidence bound of the unknown expected weight. For the second issue, since the expected weight $w^{*}_{s}$ in the LHS of the conservative constraint Eq.\eqref{constraint} is unknown, the learner constructs a lower confidence bound $\psi_t$ of the LHS of Eq.\eqref{constraint}. If 
\begin{align}
&\psi_t \ge (1-\alpha)\mu_{0},
\end{align}
then the conservative constraint Eq.\eqref{constraint} is satisfied with high probability.



We now introduce a self-adaptive algorithm for the contextual combinatorial conservative bandit problem, assuming that enough statistics have been collected to get a good estimate of the expected reward $\mu_0$ of the conservative arm set. The pseudo-code of \textit{CCConUCB with Conservative Reward} is displayed in \textbf{Algorithm 1}, which could be divided into three main parts. First, we calculate the confidence intervals of the expected weight of each non-default arm (lines \ref{alg1_upstart}-\ref{alg1_upend}). Note that the upper and lower confidence bounds of the conservative arm set are both equal to the prescribed value $\mu_{0}$ (line \ref{alg1_mu_0_u_b}). Second, define the best action $B_t$ to be either the default action $A_0$ or the non-default best action for the upper confidence bound of the expected weights, whichever gives the larger $f$ value.  
Third, depending on whether the revenue constraint is satisfied at this moment or not (lines \ref{alg1_if} and \ref{alg1_else}), we use $B_t$ or $A_0$ (lines \ref{alg1_bt} and \ref{alg1_a0}), respectively. However, as we do not know $\mbox{\boldmath$w$}^*_s$ in Eq.\eqref{constraint}, we do not precisely know whether the constraint is satisfied. Therefore we take a conservative option here: we compute a lower confidence bound $\psi_t$ on the cumulative reward $\sum_{s=1}^t f(A_s,\mbox{\boldmath$w$}_s^*)$, and use $B_t$ if this lower confidence bound $\psi_t$ is already greater than or equal to the right hand side (RHS) of Eq.\eqref{constraint}. If the optimistic arm set is selected, we also update some statistics based on the newly received contextual information and the observed weights (lines \ref{alg1_bt}-\ref{update_ht}): the set $N_t$ (the set of rounds $s\le t$ in which we use optimistic action),  $D_t$ (the set of rounds $s\le t$ in which we use the default action), the estimate $\hat{\theta}_{t}$ to $\theta_*$, and the confidence radius $H_{t}$ of the expected weight.


\subsection{Learning with Conservative Reward Unknown}
For new applications that suffer from a cold start, neither sufficient data nor profound experiences are available to provide a reliable estimation of the performance $\mu_0$ of the conservative policy. We modify \textbf{Algorithm 1} to make it capable of handling the situation when the conservative reward is unknown, as shown in \textbf{Algorithm 2} (Appendix A). The differences between these two algorithms lie in two aspects. On the one hand, since no reliable estimation of $\mu_0$ is available, we need to calculate the confidence bounds of the expected weight of each conservative arm (lines \ref{alg2_upstart}-\ref{alg2_upend}). On the other hand, to ensure that revenue constraint Eq.\eqref{constraint} is satisfied, we use a lower bound of the LHS in Eq.\eqref{constraint} and an upper bound of the RHS in Eq.\eqref{constraint} for comparison. 

\section{Regret Analysis}
In order to maximize the cumulative reward, it is required to have a good grasp of the expected weight $w^{*}_{t,e}$ given the contextual information $x_{t,e}$. We are aware that the expected reward is linear with the contextual vector $x_{t,e}$, but the linear coefficient $\theta_{*}$ is unknown. In both algorithms, the coefficient vector is estimated using ridge regression solution $\hat{\theta}_{t}=V^{-1}_tY_t$, where $V_{t}=\lambda I+X_{t}X^\mathrm{T}_{t}$, $Y_{t}=X_{t}W_{t}$. Particularly, $X_{t}\in\mathbb{R}^{d\times\sum_{n\in\uppercase{N}_{t}}\left|A_{n}\right|}$ has columns $x_{n,e}$ where $e\in{\uppercase{A}_{n}}, n\in\uppercase{N}_{t}$, and $W_{t}\in\mathbb{R}^{\sum_{n\in\uppercase{N}_{t}}\left|A_{n}\right|}$ has rows $w_{n,e}$.

Denote by $L_{t,s,e}=\hat{\theta}^\mathrm{T}_{t-1} x_{s,e}- H_{t-1}\Vert{x_{s,e}}\Vert_{V_{t-1}^{-1}}$ and $U_{t,s,e}=\hat{\theta}^\mathrm{T}_{t-1}x_{s,e}+ H_{t-1}\Vert{x_{s,e}}\Vert_{V_{t-1}^{-1}}$. Then, according to the following lemma,  $L_{t,s,e}$ is a lower bound of the expected weight $w^{*}_{s,e}$, and $U_{t,s,e}$ is an upper bound of $w^{*}_{s,e}$, where $e\in E$. 
\begin{lem}
Assume that $\Vert\theta_{*}\Vert_2\le\uppercase{S}$ and $\Vert{x}_{t,e}\Vert^{2}_2\le\uppercase{L}$, for all t and all $e\in E$. Then, for any $\delta>0$, with probability at least $1-\delta$, we have that for all $t\ge 1$ and $s\le t$, 
\begin{align}
 \theta^\mathrm{T}_{*} x_{s,e}\ge\hat{\theta}^\mathrm{T}_{t-1} x_{s,e}- H_{t-1}\Vert{x}_{s,e}\Vert_{V_{t-1}^{-1}}
\end{align}
and
\begin{align}
 \theta^\mathrm{T}_{*} x_{s,e}\le \hat{\theta}^\mathrm{T}_{t-1} x_{s,e}+ H_{t-1}\Vert{x}_{s,e}\Vert_{V_{t-1}^{-1}},
\end{align}
where the radius $H_{t-1}$ is
\begin{align}
 H_{t-1}=\sqrt{\lambda}\uppercase{S}+\sqrt{\log\left(det(\uppercase{V}_{t-1})/\left(\lambda^{d}\delta^{2}\right)\right)}.
\end{align}
\end{lem}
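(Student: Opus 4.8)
The plan is to recognize this as the standard ridge-regression confidence-ellipsoid bound specialized to the combinatorial semi-bandit data stream, and to reduce it to the self-normalized tail inequality for vector-valued martingales (Theorem~1 of \cite{abbasi2011improved}).

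First I would write the estimation error in closed form. Stacking the observed weights as $W_{t-1} = X_{t-1}^{\mathrm{T}}\theta_* + \mathcal{E}_{t-1}$, where $\mathcal{E}_{t-1}$ collects the noises $\epsilon_{n,e}$ over $n\in N_{t-1}$, $e\in A_n$, gives $Y_{t-1} = (V_{t-1}-\lambda I)\theta_* + X_{t-1}\mathcal{E}_{t-1}$ and hence
\[
\hat{\theta}_{t-1}-\theta_* \;=\; -\,\lambda V_{t-1}^{-1}\theta_* \;+\; V_{t-1}^{-1}X_{t-1}\mathcal{E}_{t-1}.
\]
For any $s\le t$ and $e\in E$, Cauchy--Schwarz in the inner product induced by $V_{t-1}$ gives $|x_{s,e}^{\mathrm{T}}(\hat{\theta}_{t-1}-\theta_*)| \le \|x_{s,e}\|_{V_{t-1}^{-1}}\,\|\hat{\theta}_{t-1}-\theta_*\|_{V_{t-1}}$. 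This reduction is deterministic, so it suffices to show that the event $\{\,\|\hat{\theta}_{t-1}-\theta_*\|_{V_{t-1}}\le H_{t-1}\ \text{for all } t\ge 1\,\}$ has probability at least $1-\delta$; no union bound over $s$ or $e$ is then needed, and both displayed inequalities of the lemma follow simultaneously.

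Next I would split $\|\hat{\theta}_{t-1}-\theta_*\|_{V_{t-1}}$ by the triangle inequality into a regularization term and a noise term. The regularization term is controlled deterministically: $\lambda\|V_{t-1}^{-1}\theta_*\|_{V_{t-1}} = \lambda\sqrt{\theta_*^{\mathrm{T}}V_{t-1}^{-1}\theta_*}\le \sqrt{\lambda}\,\|\theta_*\|_2\le \sqrt{\lambda}\,S$, using $V_{t-1}\succeq \lambda I$ and $\|\theta_*\|_2\le S$. The noise term equals $\|X_{t-1}\mathcal{E}_{t-1}\|_{V_{t-1}^{-1}}$ with $X_{t-1}\mathcal{E}_{t-1}=\sum_{n\in N_{t-1}}\sum_{e\in A_n}\epsilon_{n,e}x_{n,e}$; here the self-normalized bound yields, with probability at least $1-\delta$ and simultaneously for all $t$, $\|X_{t-1}\mathcal{E}_{t-1}\|_{V_{t-1}^{-1}}^{2} \le 2\log\!\big(\det(V_{t-1})^{1/2}\lambda^{-d/2}/\delta\big) = \log\!\big(\det(V_{t-1})/(\lambda^{d}\delta^{2})\big)$. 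Adding the two bounds produces exactly $H_{t-1}$, and combining with the Cauchy--Schwarz estimate completes the argument.

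The hard part is making the application of the self-normalized inequality rigorous, because the data stream is not the raw sequence of rounds: only the rounds in the data-dependent set $N_{t-1}$ contribute to $V_{t-1}$ and to $X_{t-1}\mathcal{E}_{t-1}$, and each contributing round carries up to $K$ arm pulls. I would handle this by flattening the observations into a single stream indexed by pairs $(n,e)$ with $n\in N_n$ and $e\in A_n$ (lexicographically ordered), and checking the predictability hypothesis of Theorem~1 of \cite{abbasi2011improved}: every context $x_{n,e}$ is revealed at the start of round $n$, and membership $n\in N_n$ is decided by the test $\psi_n\ge(1-\alpha)n\mu_0$, which is a function only of $\hat{\theta}_{n-1}$, the radii $H_{n-1}$, and the contexts $x_{1:n}$ --- never of the round-$n$ weights. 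Hence $x_{n,e}$ and the indicator $\mathbf{1}\{n\in N_n\}$ are both measurable with respect to the $\sigma$-algebra on which $\epsilon_{n,e}$ is assumed conditionally $1$-sub-Gaussian, so restricting the sums to $N_{t-1}$ preserves the martingale structure and the inequality applies with regularizer $\lambda I$. The remaining details (within-round additivity of the sub-Gaussian increments and the determinant bookkeeping) are routine.
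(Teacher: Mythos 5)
Your proof is correct and follows essentially the same route as the paper's: both reduce the pointwise bound to $\Vert\hat{\theta}_{t-1}-\theta_{*}\Vert_{V_{t-1}}\le H_{t-1}$ via Cauchy--Schwarz in the $V_{t-1}$-inner product and then rely on the Abbasi-Yadkori confidence ellipsoid, which the paper simply cites as a black box (its Lemma 1.1) while you re-derive it from the self-normalized tail inequality via the regularization/noise decomposition. Your extra care about the adaptively filtered, multi-pull data stream (measurability of $\mathbf{1}\{n\in N_n\}$ and the within-round increments) is a genuine plus that the paper's one-line citation glosses over entirely.
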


The proof of \textbf{Lemma 1} is illustrated in Appendix B. Note that \textbf{Lemma 1} displays the confidence interval of the expected reward with the instantly updated variables instead of the static estimation at the specific round. Specifically, with the contextual information $x_{s,e}$ at round s, \textbf{Lemma 1} informs us the confidence interval of the expected reward based on the updated $\hat{\theta}_{t-1}$, $V_{t-1}$ and $H_{t-1}$ at any subsequent round $t$ ($t\ge s$).

In the following analysis, we use $d_{t}=\left|D_{t}\right|$ and $n_{t}=\left|N_{t}\right|$ to represent the total number of using the conservative policy and that of the optimistic policy separately.
\begin{lem}
For any $t\ge1$, $det(V_{t}) \le (\lambda + n_{t}KL/d)^d$.
\end{lem}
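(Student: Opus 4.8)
The plan is to control $\det(V_t)$ through the trace of $V_t$ via the AM--GM inequality applied to the eigenvalues. First I would recall the definition of $V_t$ accumulated over the optimistic rounds: by the update rule in Algorithm 1, $V_t = \lambda I + \sum_{n\in N_t}\sum_{e\in A_n} x_{n,e}x_{n,e}^{\mathrm T}$, which is symmetric positive definite, so it has $d$ positive eigenvalues $\lambda_1,\dots,\lambda_d$ with $\det(V_t)=\prod_{i=1}^d \lambda_i$ and $\mathrm{tr}(V_t)=\sum_{i=1}^d \lambda_i$.

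Next I would apply the inequality of arithmetic and geometric means to these eigenvalues, giving
\begin{align}
\det(V_t)=\prod_{i=1}^d \lambda_i \le \left(\frac{1}{d}\sum_{i=1}^d \lambda_i\right)^{\!d}=\left(\frac{\mathrm{tr}(V_t)}{d}\right)^{\!d}.
\end{align}
Then I would bound the trace directly: using linearity of the trace and $\mathrm{tr}(x_{n,e}x_{n,e}^{\mathrm T})=\|x_{n,e}\|_2^2$,
\begin{align}
\mathrm{tr}(V_t)=\lambda d + \sum_{n\in N_t}\sum_{e\in A_n}\|x_{n,e}\|_2^2 \le \lambda d + n_t K L,
\end{align}
where the last step uses the assumption $\|x_{t,e}\|_2^2\le L$ for all $t,e$, the feasibility constraint $|A_n|\le K$, and $|N_t|=n_t$. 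Combining the two displays yields $\det(V_t)\le (\lambda + n_t K L/d)^d$, as claimed.

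There is really no substantive obstacle here; the only points to be careful about are (i) noting that $V_t$ is genuinely positive definite so that AM--GM on nonnegative eigenvalues is legitimate, and (ii) correctly counting the double sum $\sum_{n\in N_t}\sum_{e\in A_n}$ as at most $n_t K$ terms each of size at most $L$. If one prefers to avoid quoting AM--GM, the same bound follows from Hadamard's inequality together with convexity, but the eigenvalue/trace argument above is the cleanest route.
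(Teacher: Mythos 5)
Your proposal is correct and follows essentially the same route as the paper's own proof in Appendix C: bound $\det(V_t)$ by $(\mathrm{trace}(V_t)/d)^d$ via AM--GM on the eigenvalues, then bound the trace by $\lambda d + n_t K L$ using $\|x_{n,e}\|_2^2 \le L$ and $|A_n|\le K$. No further comment is needed.
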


We proceed the proof of \textbf{Lemma 2} by taking advantage of the relationship between the \textrm{trace} and the determinant, as is shown in Appendix C.

It follows from \textbf{Lemma 2} that    
\begin{align}
H_{t}&=\sqrt{\lambda}\uppercase{S}+\sqrt{\log\left(det(\uppercase{V}_{t})/\left(\lambda^{d}\delta^{2}\right)\right)}\nonumber\\
&\le\sqrt{\lambda}\uppercase{S}+\sqrt{2\log\left(1/\delta\right)+\lowercase{d}\log{\left(1+KLt/\left(\lambda d\right)\right)}}.\nonumber
\end{align}
Denote by $C_{t}=\sqrt{2\log\left(1/\delta\right)+\lowercase{d}\log{\left(1+KLt/\left(\lambda d\right)\right)}}+\sqrt{\lambda}\uppercase{S}$,
we have $L_{t,s,e}\ge\hat{\theta}^\mathrm{T}_{t-1} x_{s,e}- C_{t-1}\Vert{x_{s,e}}\Vert_{V_{t-1}^{-1}}$, and $U_{t,s,e}\le\hat{\theta}^\mathrm{T}_{t-1}x_{s,e}+ C_{t-1}\Vert{x_{s,e}}\Vert_{V_{t-1}^{-1}}$.

The cumulative regret of choosing the optimistic policies can be bounded as presented in the following lemma. 
\begin{lem}
\noindent Assume that $\Vert\theta_{*}\Vert_2\le\uppercase{S}$ and $\Vert{x}_{t,e}\Vert^{2}_2\le\uppercase{L}$, for any $t\ge 1$ and any $e\in\uppercase{A}_t$. If $\lambda\ge L$, then for any $\delta$,with probability at least $1-\delta$ and for any $T\ge 1$, the regret bound for selecting the optimistic policy is
\begin{align}
  &\sum\limits_{t\in\uppercase{N}_{T}}\left[f(A_{t},\uppercase{U}_{t})-f(A_{t},w^{*}_{t})\right]\nonumber\\
  &\le 2PC_{T}\sqrt{2dn_{T}\log\left(1+\displaystyle\frac{n_{T}KL}{\lambda d}\right)},\nonumber
\end{align}
where $C_{T}=\sqrt{2\log\left(\displaystyle\frac{1}{\delta}\right)+\lowercase{d}\log{\left(1+\displaystyle\frac{TKL}{\lambda d}\right)}}+\sqrt{\lambda}\uppercase{S}$.\nonumber
\end{lem}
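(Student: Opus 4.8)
The plan is to reduce the combinatorial, non-linear ``optimism gap'' $f(A_t,U_t)-f(A_t,w_t^*)$ to the familiar self-normalized sum $\sum\|x\|_{V^{-1}}^{2}$ and then control that sum by an elliptical-potential argument anchored by \textbf{Lemma 2}. Everything is carried out on the event of probability at least $1-\delta$ on which the two-sided bound of \textbf{Lemma 1} holds for all $t\ge1$ and $s\le t$ simultaneously.

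First I would treat a single round $t\in N_T$. Since $U_{t,t,e}=\hat\theta_{t-1}^{\mathrm T}x_{t,e}+H_{t-1}\|x_{t,e}\|_{V_{t-1}^{-1}}$ and $w_{t,e}^{*}=\theta_*^{\mathrm T}x_{t,e}$, \textbf{Lemma 1} gives $0\le U_{t,t,e}-w_{t,e}^{*}\le 2H_{t-1}\|x_{t,e}\|_{V_{t-1}^{-1}}\le 2C_{t-1}\|x_{t,e}\|_{V_{t-1}^{-1}}$ for every $e\in A_t$, the last step being the inequality $H_{t-1}\le C_{t-1}$ recorded just before the statement (a consequence of \textbf{Lemma 2}). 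Monotonicity of $\tilde f$ makes the gap non-negative and Lipschitz continuity of $\tilde f$ bounds it:
\[
f(A_t,U_t)-f(A_t,w_t^*)\ \le\ P\big\|U_t|_{A_t}-w_t^*|_{A_t}\big\|_2\ \le\ 2PC_{t-1}\Big(\textstyle\sum_{e\in A_t}\|x_{t,e}\|_{V_{t-1}^{-1}}^{2}\Big)^{1/2}.
\]
Then, using $C_{t-1}\le C_T$ (monotonicity of $C_{\cdot}$) and Cauchy--Schwarz over the $n_T=|N_T|$ rounds in $N_T$, the whole sum reduces to
\[
\sum_{t\in N_T}\big[f(A_t,U_t)-f(A_t,w_t^*)\big]\ \le\ 2PC_T\sqrt{n_T}\,\Big(\textstyle\sum_{t\in N_T}\sum_{e\in A_t}\|x_{t,e}\|_{V_{t-1}^{-1}}^{2}\Big)^{1/2},
\]
so the remaining task is to show $S:=\sum_{t\in N_T}\sum_{e\in A_t}\|x_{t,e}\|_{V_{t-1}^{-1}}^{2}\le 2d\log(1+n_TKL/(\lambda d))$.

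For that step I would write $V_t=V_{t-1}+X_tX_t^{\mathrm T}$, with $X_t$ the matrix whose columns are $x_{t,e}$, $e\in A_t$, and combine $\det(V_{t-1}+X_tX_t^{\mathrm T})=\det(V_{t-1})\det(I+X_t^{\mathrm T}V_{t-1}^{-1}X_t)$ with the elementary bound $\det(I+M)\ge1+\mathrm{tr}(M)$ for positive semidefinite $M$ to obtain $\det V_t/\det V_{t-1}\ge1+u_t$, where $u_t:=\sum_{e\in A_t}\|x_{t,e}\|_{V_{t-1}^{-1}}^{2}$. Because $V_{t-1}\succeq\lambda I\succeq LI$ and $|A_t|\le K$, each $u_t$ lies in the bounded range where $u\le2\log(1+u)$ is valid --- this is precisely where the assumption $\lambda\ge L$ is used --- so $u_t\le2\log(\det V_t/\det V_{t-1})$. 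Summing over $t\in N_T$ telescopes ($V$ is unchanged on rounds outside $N_T$) to $S\le2\log(\det V_T/\det V_0)=2\log(\det V_T/\lambda^{d})$, and \textbf{Lemma 2}'s bound $\det V_T\le(\lambda+n_TKL/d)^{d}$ yields $S\le2d\log(1+n_TKL/(\lambda d))$. Substituting back gives the stated bound.

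The hard part will be this elliptical-potential step: the semi-bandit update inserts up to $K$ feature vectors into $V$ at once, so the textbook single-update potential lemma does not apply verbatim, and one must argue through the determinant ratio above (or, alternatively, expand each round into $|A_t|\le K$ rank-one updates, taking care of whether $V_{t-1}$ or the running matrix appears in the norm). It is also the only place where the regularization condition $\lambda\ge L$ and \textbf{Lemma 2} genuinely enter; the other steps --- \textbf{Lemma 1}, the monotonicity and Lipschitz properties of $\tilde f$, the monotonicity of $C_{\cdot}$, and Cauchy--Schwarz --- are routine bookkeeping.
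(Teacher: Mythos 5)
Your proposal follows essentially the same route as the paper's proof of Lemma~3: Lemma~1 plus monotonicity and Lipschitz continuity to reduce the gap to $2PC_{t-1}\bigl(\sum_{e\in A_t}\Vert x_{t,e}\Vert^2_{V_{t-1}^{-1}}\bigr)^{1/2}$, Cauchy--Schwarz over the rounds in $N_T$, the telescoping determinant-ratio (elliptical potential) argument with $u\le 2\log(1+u)$, and finally Lemma~2 to bound $\det V_T$. Even the one delicate point is shared with the paper: both you and the authors apply $u\le 2\log(1+u)$ to the \emph{per-round sum} $u_t=\sum_{e\in A_t}\Vert x_{t,e}\Vert^2_{V_{t-1}^{-1}}$, which $\lambda\ge L$ only guarantees to be at most $K$ rather than at most $1$, so your justification is exactly as (im)precise as the paper's.
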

Note that we have extended the result to the case when $\Vert{x}_{t,e}\Vert^{2}_2\le\uppercase{L}$ and at most $K$ arms are allowed to be selected at each round. When $n_{T}=T$, the regret bound corresponds to that of the standard contextual combinatorial bandits, and the order of which is $O(d\sqrt{T}\max\{1,\log(TK/d)\})$. The detailed proof procedure is presented in Appendix D.

The following two lemmas present two upper bounds on $\sum\limits_{n\in\uppercase{N}_{t}}\sum\limits_{e\in\uppercase{A}_{n}}\Vert{x}_{n,e}\Vert^{2}_{V_{t}^{-1}}$. The first bound is smaller when $n_{t}K\le d$, while the second bound is smaller when $n_{t}K>d$. 

\begin{lem}
For any $t\ge 1$, suppose that $\Vert{x}_{t,e}\Vert^{2}_2\le\uppercase{L}$, then
\begin{align}
&\sum\limits_{n\in\uppercase{N}_{t}}\sum\limits_{e\in\uppercase{A}_{n}}\Vert{x}_{n,e}\Vert^{2}_{V_{t}^{-1}}\nonumber\\
  &\le n_{t}K\left[1-\left(\displaystyle\frac{\lambda}{\lambda+n_{t}KL/d}\right)^{d/\left(n_{t}K\right)}\right].\nonumber
\end{align}
\end{lem}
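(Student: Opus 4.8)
The plan is to diagonalize $V_t$ and reduce the statement to a single application of the AM--GM inequality, with the determinant bound of Lemma~2 as the only quantitative input. Writing $M_t=\sum_{n\in N_t}\sum_{e\in A_n}x_{n,e}x_{n,e}^{\mathrm T}$, so that $V_t=\lambda I+M_t$, and using $\|x\|_{V_t^{-1}}^2=\mathrm{tr}(V_t^{-1}xx^{\mathrm T})$ together with linearity of the trace, one gets
\[
\sum_{n\in N_t}\sum_{e\in A_n}\|x_{n,e}\|_{V_t^{-1}}^2=\mathrm{tr}\!\big(V_t^{-1}M_t\big)=\mathrm{tr}\!\big(I-\lambda V_t^{-1}\big)=\sum_{i=1}^{d}\Big(1-\frac{\lambda}{\mu_i}\Big),
\]
where $\mu_1\ge\cdots\ge\mu_d\ge\lambda$ are the eigenvalues of $V_t$. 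The structural fact I would exploit is that $M_t$ is a sum of $\sum_{n\in N_t}|A_n|\le n_tK$ rank-one PSD matrices, hence $\mathrm{rank}(M_t)\le p:=\min\{n_tK,d\}$; therefore at most $p$ eigenvalues of $V_t$ exceed $\lambda$, i.e.\ $\mu_{p+1}=\cdots=\mu_d=\lambda$, and the displayed sum collapses to $p-\lambda\sum_{i=1}^{p}\mu_i^{-1}$.

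Next I would apply AM--GM to $\mu_1^{-1},\dots,\mu_p^{-1}$, giving $\sum_{i=1}^{p}\mu_i^{-1}\ge p\big(\prod_{i=1}^{p}\mu_i\big)^{-1/p}$, and use $\prod_{i=1}^{p}\mu_i=\det(V_t)/\lambda^{\,d-p}$ (the remaining eigenvalues all equal $\lambda$) to obtain
\[
\sum_{n\in N_t}\sum_{e\in A_n}\|x_{n,e}\|_{V_t^{-1}}^2\;\le\;p\Big(1-\big(\lambda^{d}/\det(V_t)\big)^{1/p}\Big).
\]
Plugging in $\det(V_t)\le(\lambda+n_tKL/d)^{d}$ from Lemma~2 and abbreviating $r:=\lambda/(\lambda+n_tKL/d)\in(0,1)$ bounds the right-hand side by $p\,(1-r^{\,d/p})$. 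If $n_tK\le d$ then $p=n_tK$ and this is exactly the claimed inequality. If $n_tK>d$ then $p=d$, which yields $d(1-r)$; to recover the stated form I would use that $t\mapsto r^{\,t}$ is convex, so on $[0,1]$ it lies below its chord, $r^{\,\beta}\le 1-\beta(1-r)$ with $\beta=d/(n_tK)\in(0,1)$, whence $n_tK\big(1-r^{\,d/(n_tK)}\big)\ge d(1-r)$ and the bound follows in this case too.

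The one delicate point is obtaining the exponent $d/(n_tK)$: a direct AM--GM over all $d$ eigenvalues would only give an estimate scaling with $d$ rather than with $\min\{n_tK,d\}$, so it is essential to first trim the number of ``active'' eigenvalues to $\min\{n_tK,d\}$ via the rank count and then feed that smaller number into AM--GM alongside Lemma~2. The trace identity, the elimination of the trivial eigenvalues, and the final convexity step are all routine.
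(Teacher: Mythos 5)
Your proof is correct, but it travels a genuinely different route from the paper's. The paper passes to the $N\times N$ Gram-type matrix $I-X_t^{\mathrm T}V_t^{-1}X_t$ (where $N=\sum_{n\in N_t}|A_n|$) via the identity $\det(I-AB)=\det(I-BA)$, reads off the quantity of interest from its trace, applies AM--GM to its $N$ eigenvalues to get $N\bigl[1-(\lambda^d/\det V_t)^{1/N}\bigr]$, and finally invokes monotonicity of $x\mapsto x(1-a^{d/x})$ to replace $N$ by $n_tK$. You instead stay in the $d$-dimensional primal space, starting from the trace identity $\mathrm{tr}(I-\lambda V_t^{-1})$ (which is exactly how the paper proves its Lemma~5), then add a rank count to trim to $p=\min\{n_tK,d\}$ active eigenvalues, apply AM--GM to the reciprocal eigenvalues, and close the case $n_tK>d$ with a chord/convexity estimate $r^{\beta}\le 1-\beta(1-r)$. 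Each step checks out: the collapse to $p$ terms is justified since $V_t$ and $M_t$ commute, the product $\prod_{i=1}^{p}\mu_i=\det(V_t)/\lambda^{d-p}$ is right, and the final convexity inequality correctly recovers the stated form. What the paper's route buys is the absence of any case split --- the exponent $1/N$ comes out automatically from the dual determinant identity; what your route buys is a unification of Lemmas~4 and~5, since in the case $p=d$ your intermediate bound $d(1-r)=n_tKLd/(\lambda d+n_tKL)$ is precisely the Lemma~5 bound, making transparent that Lemma~4 is the refinement relevant when $n_tK\le d$. Your closing remark about why naive AM--GM over all $d$ eigenvalues fails to capture the $n_tK$ scaling is also accurate and is the right diagnosis of where the rank argument is load-bearing.
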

\begin{proof}
By the definition of $V_{t}$, it can be represented as $V_{t}=\lambda I+X_{t}X^\mathrm{T}_{t}$, where $X_{t}\in\mathbb{R}^{d\times\sum_{n\in\uppercase{N}_{t}}\left|A_{n}\right|}$,whose columns are $x_{n,a^k_{n}},k\in{\left[\left|\uppercase{A}_{n}\right|\right]}, n\in\uppercase{N}_{t}$.
Thus,
\begin{align}
  \det({\lambda I})&=\det({V_{t}-X_{t}X^\mathrm{T}_{t}})\nonumber\\
  &=\det\left(V_t\left(I-V^{-1}_{t}X_{t}X^\mathrm{T}_{t}\right)\right)\nonumber\\
  &=\det({V_t})\cdot\det({I-V^{-1}_{t}X_{t}X^\mathrm{T}_{t}})\nonumber\\
  &=\det({V_{t}})\cdot\det({I-X^\mathrm{T}_{t}V^{-1}_{t}X_{t}}),\label{eq:bb}
\end{align}
where the last equality follows from the fact that $\det({I-AB})=\det({I-BA})$.

\noindent Note that the diagonal elements of $X^\mathrm{T}_{t}V^{-1}_{t}X_{t}$ are $\Vert{x}_{n,a^k_{n}}\Vert^2_{V_{t}^{-1}},k\in{\left[\left|\uppercase{A}_{n}\right|\right]}, n\in\uppercase{N}_{t}$. Thus, we have
\begin{align}
  \textrm{trace}\left({I-X^{T}_{t}V^{-1}_{t}X_{t}}\right)&=\sum_{n\in\uppercase{N}_{t}}\left|A_{n}\right|\nonumber\\
  &-\sum\limits_{n\in\uppercase{N}_{t}}\sum\limits_{e\in\uppercase{A}_{n}}\Vert{x}_{n,e}\Vert^{2}_{V_{t}^{-1}}.\label{eq:cc}
\end{align}

\noindent Denote by $N=\sum_{n\in\uppercase{N}_{t}}\left|A_{n}\right|$, then $I-X^\mathrm{T}_{t}V^{-1}_{t}X_{t}\in\mathbb{R}^{N\times{N}}$. Let $\lambda_{1},\lambda_{2}\dots,\lambda_{N}$ be the eigenvalues of $I-X^\mathrm{T}_{t}V^{-1}_{t}X_{t}$, then
\begin{align}
  \det(I-X^\mathrm{T}_{t}V^{-1}_{t}X_{t})&=\lambda_{1}\times\lambda_{2}\times\dots\lambda_{N}\nonumber\\
               &\le\left(\left(\lambda_{1}+\lambda_{2}+\dots+\lambda_{N}\right)/N\right)^{N}\nonumber\\
               &=\left(\textrm{trace}\left(I-X^{T}_{t}V^{-1}_{t}X_{t}\right)/N\right)^N.\label{eq:dd}
\end{align}

\noindent By  \eqref{eq:bb}, \eqref{eq:cc} and \eqref{eq:dd}, we obtain
\begin{align}
  \sum\limits_{n\in\uppercase{N}_{t}}\sum\limits_{e\in\uppercase{A}_{n}}\Vert{x}_{n,e}\Vert^{2}_{V_{t}^{-1}}
  \le N\left[1-\left(\det(\lambda I)/\det(V_{t})\right)^{{1}/{N}}\right]\nonumber,
\end{align}
combined with \textbf{Lemma 2} gives
\begin{align}
\det(V_{t})\le\left(\lambda+n_{t}KL/d\right)^{d}\nonumber.
\end{align}

\noindent Thus,
\begin{align}
  \sum\limits_{n\in\uppercase{N}_{t}}\sum\limits_{e\in\uppercase{A}_{n}}\Vert{x}_{n,e}\Vert^{2}_{V_{t}^{-1}}
  &\le N\left[1-(\displaystyle\frac{\lambda}{\lambda+n_{t}KL/d})^{d/N}\right]\nonumber.
\end{align}

\noindent Denote by $a=\lambda/(\lambda+n_{t}KL/d)$, $f(x)=x(1-a^{d/x})$, then
\begin{align}
&f^{'}(x)=1-a^{d/x}\left(1+d/x\ln(1/a)\right).
\end{align}

\noindent Since $u>\ln(u)+1$ for $u>1$, we have $f^{'}(x)>0$. Thus
\begin{align}
  &\sum\limits_{n\in\uppercase{N}_{t}}\sum\limits_{e\in\uppercase{A}_{n}}\Vert{x}_{n,e}\Vert^{2}_{V_{t}^{-1}}\nonumber\\
  &\le n_{t}K\left[1-(\displaystyle\frac{\lambda}{\lambda+n_{t}KL/d})^{d/\left(n_{t}K\right)}\right].\nonumber
\end{align}
\end{proof}

\noindent Next we present a tighter bound for $\sum\limits_{n\in\uppercase{N}_{t}}\sum\limits_{e\in\uppercase{A}_{n}}\Vert{x}_{n,e}\Vert^{2}_{V_{t}^{-1}}$ when $n_{t}K>d$, as stated in \textbf{Lemma 5}.

\begin{lem}
For any $t\ge 1$, suppose that $\Vert{x}_{t,e}\Vert^{2}_2\le\uppercase{L}$, then
\begin{align}
   \sum\limits_{n\in\uppercase{N}_{t}}\sum\limits_{e\in\uppercase{A}_{n}}\Vert{x}_{n,e}\Vert^{2}_{V_{t}^{-1}}
  &\le\displaystyle\frac{n_{t}\uppercase{K}\uppercase{L}d}{\lambda d+n_{t}\uppercase{K}\uppercase{L}}.
\end{align}
\end{lem}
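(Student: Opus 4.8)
The plan is to bound $\sum_{n\in N_t}\sum_{e\in A_n}\|x_{n,e}\|_{V_t^{-1}}^2$ by passing to eigenvalues of the symmetric matrix $M := X_t^\mathrm{T} V_t^{-1} X_t$, exactly as in the proof of \textbf{Lemma 4}, but this time using the harmonic–arithmetic mean inequality (or equivalently concavity/convexity estimates) rather than the AM–GM step, so as to extract a bound that does not carry the extra factor of $n_tK$ coming from $N = \sum_{n\in N_t}|A_n|$ when $N$ is large.

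First I would recall from \eqref{eq:bb}–\eqref{eq:cc} that $\sum_{n\in N_t}\sum_{e\in A_n}\|x_{n,e}\|_{V_t^{-1}}^2 = \operatorname{trace}(M) = \sum_{i=1}^N \mu_i$, where $\mu_1,\dots,\mu_N$ are the eigenvalues of $M$, each lying in $[0,1)$ since $M = X_t^\mathrm{T} V_t^{-1} X_t \preceq I$ (indeed $V_t = \lambda I + X_tX_t^\mathrm{T} \succ X_tX_t^\mathrm{T}$). Writing $\mu_i = \nu_i/(1+\nu_i)$ where $\nu_1,\dots,\nu_d$ are the nonzero eigenvalues of $X_t^\mathrm{T} V_t^{-1} X_t$ in the "unregularized'' normalization — more precisely, since $V_t^{-1} X_t X_t^\mathrm{T}$ and $X_t X_t^\mathrm{T}(\lambda I + X_tX_t^\mathrm{T})^{-1}$ share eigenvalues $\sigma_j/(\lambda+\sigma_j)$ where $\sigma_j$ are the eigenvalues of $X_tX_t^\mathrm{T}$ — one gets $\operatorname{trace}(M) = \sum_{j=1}^d \sigma_j/(\lambda+\sigma_j)$, a sum over only $d$ terms. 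This is the key structural observation: although $M$ is $N\times N$, it has rank at most $d$, so the trace is really a $d$-term sum. The function $\sigma\mapsto \sigma/(\lambda+\sigma)$ is concave and increasing on $[0,\infty)$, so by Jensen $\sum_{j=1}^d \sigma_j/(\lambda+\sigma_j) \le d\cdot \bar\sigma/(\lambda+\bar\sigma)$ where $\bar\sigma = \frac{1}{d}\sum_j \sigma_j = \frac{1}{d}\operatorname{trace}(X_tX_t^\mathrm{T}) = \frac{1}{d}\sum_{n\in N_t}\sum_{e\in A_n}\|x_{n,e}\|_2^2 \le \frac{1}{d}\,n_tKL$, using $|A_n|\le K$ and $\|x_{n,e}\|_2^2\le L$. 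Substituting $\bar\sigma \le n_tKL/d$ and using monotonicity of $\sigma\mapsto\sigma/(\lambda+\sigma)$ yields
\begin{align}
\sum_{n\in N_t}\sum_{e\in A_n}\|x_{n,e}\|_{V_t^{-1}}^2 \;\le\; d\cdot\frac{n_tKL/d}{\lambda + n_tKL/d} \;=\; \frac{n_tKLd}{\lambda d + n_tKL}\nonumber,
\end{align}
which is exactly the claimed bound.

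The main obstacle — and the place to be careful — is justifying the reduction from the $N$ eigenvalues of $M$ to the $d$ eigenvalues $\sigma_j$ of $X_tX_t^\mathrm{T}$; this is the identity $\operatorname{trace}(X_t^\mathrm{T}(\lambda I + X_tX_t^\mathrm{T})^{-1}X_t) = \operatorname{trace}((\lambda I + X_tX_t^\mathrm{T})^{-1}X_tX_t^\mathrm{T}) = \sum_j \sigma_j/(\lambda+\sigma_j)$, which follows from cyclicity of trace and simultaneous diagonalizability of $X_tX_t^\mathrm{T}$ and its resolvent. Once that is in hand, the remaining steps are just Jensen's inequality for the concave map $\sigma\mapsto\sigma/(\lambda+\sigma)$ and the elementary trace bound $\operatorname{trace}(X_tX_t^\mathrm{T})\le n_tKL$. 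One should also note, to match the lemma's framing, that this bound is genuinely tighter than \textbf{Lemma 4}'s when $n_tK>d$: there the prefactor is $n_tK$, whereas here it is effectively $d$ (the right-hand side is at most $d$), so the present estimate dominates precisely in the high-sample regime, consistent with the remark preceding the statement.
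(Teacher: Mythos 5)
Your proof is correct and is essentially the paper's own argument in a different guise: the paper writes the sum as $\operatorname{trace}(I-\lambda V_t^{-1})=d-\lambda\operatorname{trace}(V_t^{-1})$ and applies the AM--HM inequality to the eigenvalues $\lambda_i=\lambda+\sigma_i$ of $V_t$ together with $\operatorname{trace}(V_t)\le \lambda d+n_tKL$, which, after the substitution, is exactly your Jensen step for the concave map $\sigma\mapsto\sigma/(\lambda+\sigma)$. Both routes land on the identical intermediate bound $d-\lambda d^2/(d\lambda+n_tKL)$, so no further comment is needed.
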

\begin{proof}
\begin{align}
&\sum\limits_{n\in\uppercase{N}_{t}}\sum\limits_{e\in\uppercase{A}_{n}}\Vert{x}_{n,e}\Vert^{2}_{V_{t}^{-1}}\nonumber\\
 &=\sum\limits_{n\in\uppercase{N}_{t}}\sum\limits_{e\in\uppercase{A}_{n}}\Vert{x}_{n,e}^{T}\uppercase{V}^{-1/2}_{t}\Vert_{2}^{2}\\
 &=\textrm{trace}\left[\sum\limits_{n\in\uppercase{N}_{t}}\sum\limits_{e\in\uppercase{A}_{n}}{\uppercase{V}^{-1/2}_{t}x_{n,e}x^\mathrm{T}_{n,e}\uppercase{V}^{-1/2}_{t}}\right]\\
 &=\textrm{trace}\left(\uppercase{V}^{-1/2}_{t}\left(\uppercase{V}_{t}-\lambda\uppercase{I}\right)\uppercase{V}^{-1/2}_{t}\right)\label{eq:lem41}\\
 &=\textrm{trace}\left(\uppercase{I}-\lambda\uppercase{V}^{-1}_{t}\right),
\end{align}
where \eqref{eq:lem41} is due to the definition of $V_{t}$ as $V_{t}=\lambda\uppercase{I}+\sum\limits_{n\in\uppercase{N}_{t}}\sum\limits_{e\in\uppercase{A}_{n}}{x_{n,e}x^\mathrm{T}_{n,e}}$. Let $\lambda_{1},\lambda_{2}\dots,\lambda_{d}$ be the eigenvalues of $\uppercase{V}_{t}$, then $1/\lambda_{1},1/\lambda_{2}\dots,1/\lambda_{d}$ are the eigenvalues of $\uppercase{V}^{-1}_{t}$, and
\begin{align}
\textrm{trace}(\uppercase{V}^{-1}_{t})&=\displaystyle\frac{1}{\lambda_{1}}+\dots+\displaystyle\frac{1}{\lambda_{d}}\nonumber\\
&\ge \displaystyle\frac{d^{2}}{\lambda_{1}+\dots+\lambda_{d}}\nonumber\\
&\ge \displaystyle\frac{d^{2}}{d\lambda+n_{t}\uppercase{K}\uppercase{L}}.
\end{align}
Thus,
\begin{align}
\sum\limits_{n\in\uppercase{N}_{t}}\sum\limits_{e\in\uppercase{A}_{n}}\Vert{x}_{n,e}\Vert^{2}_{V_{t}^{-1}}&= d-\lambda \textrm{trace}\left(\uppercase{V}^{-1}_{t}\right)\nonumber\\
&\le \displaystyle\frac{n_{t}\uppercase{K}\uppercase{L}d}{\lambda d+n_{t}\uppercase{K}\uppercase{L}}.
\end{align}
\end{proof}
\noindent In terms of the construction of the lower bound of the expected weight $w_{n,e}^{*}$ (line \ref{alg1_lowerbound} in Algorithm 1 and line \ref{alg2_lowerbound} in Algorithm 2), if we use $L(n,n)$ instead of $L(t,n)$, then the upper bound of $d_t$ would depend on $\sum\limits_{n\in\uppercase{N}_{t}}\sum\limits_{e\in\uppercase{A}_{n}}\Vert{x}_{n,e}\Vert^{2}_{V_{n}^{-1}}$ instead of $\sum\limits_{n\in\uppercase{N}_{t}}\sum\limits_{e\in\uppercase{A}_{n}}\Vert{x}_{n,e}\Vert^{2}_{V_{t}^{-1}}$.  Importantly, the newly proposed upper bound of  $\sum\limits_{n\in\uppercase{N}_{t}}\sum\limits_{e\in\uppercase{A}_{n}}\Vert{x}_{n,e}\Vert^{2}_{V_{t}^{-1}}$ is tighter than that of $\sum\limits_{n\in\uppercase{N}_{t}}\sum\limits_{e\in\uppercase{A}_{n}}\Vert{x}_{n,e}\Vert^{2}_{V_{n}^{-1}}$ proposed in \cite{qin2014contextual}.

\begin{lem}
For any $t\in\uppercase{D}_{T}$, we have
\begin{align}
  d_{t}<
  &\left(\left[1-(1+n_{t})\alpha\right]\mu_0-n_{t}\Delta_{min}\right)/(\alpha\mu_0)\nonumber\\
  &+2PC_{t}\sqrt{n_{t}\sum\limits_{n\in\uppercase{N}_{t}}\sum\limits_{e\in\uppercase{A}_{n}}{\Vert{x}_{n,e}\Vert^{2}_{V_{t}^{-1}}}}/(\alpha\mu_0),
\end{align}
where $C_{t}=\sqrt{\lambda}\uppercase{S}+\sqrt{2\log\left(1/\delta\right)+\lowercase{d}\log{\left(1+\displaystyle\frac{KLt}{\lambda d}\right)}}$.
\end{lem}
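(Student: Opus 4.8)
The plan is to bound $d_t$ by exploiting the crucial fact that whenever the default action $A_0$ is played at a round $t \in D_T$, the conservative test on line \ref{alg1_if} must have failed, i.e. $\psi_t < (1-\alpha)t\mu_0$. Writing out $\psi_t = \sum_{n \in N_{t-1}} f(A_n, L_{t,n}) + f(B_t, L_{t,t}) + |D_{t-1}|\mu_0$, and noting that $t \in D_T$ means $d_{t-1} = d_t - 1$ (the round $t$ itself joins $D_t$) and $n_{t-1} = n_t$, the failed test gives
\begin{align}
\sum_{n \in N_t} f(A_n, L_{t,n}) + f(B_t, L_{t,t}) + (d_t - 1)\mu_0 < (1-\alpha)t\mu_0 = (1-\alpha)(n_t + d_t)\mu_0. \nonumber
\end{align}
Rearranging isolates $d_t$: collecting the $\mu_0$ terms yields $(d_t - 1)\mu_0 - (1-\alpha)d_t\mu_0 = \alpha d_t \mu_0 - \mu_0$ on the left after moving things over, so roughly $\alpha d_t \mu_0 < \mu_0 - (1-\alpha)n_t \mu_0 - \sum_{n \in N_t} f(A_n, L_{t,n}) - f(B_t, L_{t,t})$, where I should be careful whether the sum over $N_t$ includes the round $t$-type term $f(B_t,L_{t,t})$ separately (it is split out in $\psi_t$) — I will fold $B_t$'s contribution in appropriately.

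Next I would lower-bound the reward terms appearing with a minus sign. Since $L_{t,n,e} \ge 0$ always, we have $f(A_n, L_{t,n}) \ge 0$, but that only gives the trivial part; to get the $n_t \Delta_{min}$ improvement I instead use $L_{t,n,e}$ as a genuine lower confidence bound and compare to the true weights. By Lemma 1 (under the good event of probability $\ge 1-\delta$), $L_{t,n,e} \le w^*_{n,e}$, and also $f(A_n, w^*_n) \ge f(A_0, w^*_0) + \Delta_{min} = \mu_0 + \Delta_{min}$ is \emph{not} guaranteed for the actual chosen $A_n$ — rather, the chosen $A_n = B_n$ is the optimistic argmax, which must satisfy $f(B_n, U_n) \ge f(A^*_n, U_n) \ge f(A^*_n, w^*_n) \ge \mu_0 + \Delta_{min}$. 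So the plan is: $f(A_n, L_{t,n}) = f(A_n, U_n) - [f(A_n,U_n) - f(A_n, L_{t,n})] \ge (\mu_0 + \Delta_{min}) - [f(A_n,U_n) - f(A_n,L_{t,n})]$, and the bracketed gap is controlled by Lipschitz continuity: $f(A_n, U_n) - f(A_n, L_{t,n}) \le P \|U_n|_{A_n} - L_{t,n}|_{A_n}\|_2 \le P \cdot 2 C_{t-1} \sqrt{\sum_{e \in A_n} \|x_{n,e}\|^2_{V_{t-1}^{-1}}}$, using $U_{t,s,e} - L_{t,s,e} \le 2C_{t-1}\|x_{s,e}\|_{V_{t-1}^{-1}}$ as established right before the lemma. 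Summing over $n \in N_t$ and applying Cauchy–Schwarz, $\sum_{n \in N_t} \sqrt{\sum_{e \in A_n}\|x_{n,e}\|^2_{V_{t-1}^{-1}}} \le \sqrt{n_t \sum_{n \in N_t}\sum_{e \in A_n}\|x_{n,e}\|^2_{V_{t-1}^{-1}}}$, which produces exactly the square-root term in the statement (with $C_t$ in place of $C_{t-1}$, absorbed by monotonicity of $C$).

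Assembling: $\alpha d_t \mu_0 < \mu_0 - (1-\alpha)n_t\mu_0 - n_t(\mu_0 + \Delta_{min}) + 2PC_t\sqrt{n_t \sum_{n\in N_t}\sum_{e\in A_n}\|x_{n,e}\|^2_{V_t^{-1}}}$, and the $\mu_0$-coefficient bookkeeping gives $\mu_0 - (1-\alpha)n_t\mu_0 - n_t\mu_0 = \mu_0 - (2-\alpha)n_t\mu_0$; I will need to recheck this against the claimed $([1-(1+n_t)\alpha]\mu_0 - n_t\Delta_{min})$ numerator, adjusting for whether the $f(B_t,L_{t,t})$ term contributes a further $+\mu_0+\Delta_{min}$-type quantity (it should, making the count $(1+n_t)$ optimistic-type rounds), which reconciles the coefficient to $1 - (1+n_t)\alpha$. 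Dividing through by $\alpha\mu_0 > 0$ yields the stated bound. The main obstacle is the careful index and coefficient accounting — precisely tracking which rounds are in $N_{t-1}$ versus $N_t$, whether $B_t$ equals $A_0$ or an optimistic set at the failing round (if $B_t = A_0$ then $f(B_t, L_{t,t}) = \mu_0$ and the $\Delta_{min}$ bound does not apply to that term, so one must argue it still only helps), and ensuring the replacement of $V_{t-1}^{-1}$ by $V_t^{-1}$ in the final bound is valid (it is, since $V_t \succeq V_{t-1}$ so $\|x\|_{V_t^{-1}} \le \|x\|_{V_{t-1}^{-1}}$ — wait, that direction would make the bound looser; actually we need the reverse, so more likely the intended statement already uses $V_t^{-1}$ throughout via $L_{t,n}$ being defined with $V_{t-1}$ but the lemma being stated at "time $t$" with a shifted index, a subtlety to pin down). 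Modulo these bookkeeping points, the argument is a direct rearrangement of the failed-constraint inequality combined with Lemma 1 and Lipschitz continuity.
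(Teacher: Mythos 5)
Your proposal is correct and follows essentially the same route as the paper's proof: start from the failed test $\psi_t<(1-\alpha)t\mu_0$, drop $f(B_t,L_{t,t})\ge 0$, write $\mu_0-f(A_n,L_{t,n})=\big(\mu_0-f(A_n,U_{t,n})\big)+\big(f(A_n,U_{t,n})-f(A_n,L_{t,n})\big)$, bound the first bracket by $-\Delta_{min}$ via optimism and the second by Lipschitz continuity together with the $2C_{t-1}\Vert x_{n,e}\Vert_{V_{t-1}^{-1}}$ confidence width, and finish with Cauchy--Schwarz. The bookkeeping points you flag resolve exactly as in the paper: the $(1+n_t)$ coefficient comes from substituting $t=n_{t-1}+d_{t-1}+1$ into $(1-\alpha)t\mu_0$ rather than from an extra $B_t$ contribution, and since $V_t=V_{t-1}$ and $N_t=N_{t-1}$ on a conservative round, the $V_{t-1}^{-1}$ versus $V_t^{-1}$ worry is vacuous.
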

\textbf{Lemma 6} provides an upper bound on the total time steps the default arm set is selected, and the detailed proof is illustrated in Appendix E.

It could be inferred from \textbf{Lemma 1} that \textbf{Algorithm 1} satisfies the conservative constraint Eq.\eqref{constraint} for all $T\ge 1$ with probability at least $1 - \delta$. Similarly, \textbf{Algorithm 2} also ensures that the constraint holds for all $T\ge 1$ with probability at least $1-\delta$. 
The following theorems illustrate the regret bounds for the proposed algorithms.\\

\begin{thm}
\noindent If $\lambda\ge L$, then the following regret bound is satisfied with probability at least $1 - \delta$
\begin{align}
 &R(T) = O\left(d(d+\sqrt{T})\max\{1,\log(\displaystyle\frac{TK}{d})\}+\displaystyle\frac{d}{K}\right).
\end{align}
Furthermore, for $TK\le d$, we have
\begin{align}
  &R(T) = O\left(d\sqrt T+d^{3/2}/\sqrt{K}\right).
\end{align}
\end{thm}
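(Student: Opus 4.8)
The plan is to split the horizon according to which arm set is played and to bound $R(T)=R_N+R_D$, where $R_N:=\sum_{t\in N_T}\left[f(A^*_t,\mbox{\boldmath$w$}^*_t)-f(A_t,\mbox{\boldmath$w$}^*_t)\right]$ collects the rounds in which the optimistic action $B_t$ is played ($A_t=B_t$) and $R_D:=\sum_{t\in D_T}\left[f(A^*_t,\mbox{\boldmath$w$}^*_t)-f(A_0,\mbox{\boldmath$w$}^*_t)\right]$ collects the rounds in which the conservative action is played ($A_t=A_0$). Everything is carried out on the event of \textbf{Lemma 1}, which has probability at least $1-\delta$; on this event $L_{t,s,e}\le w^*_{s,e}\le U_{t,s,e}$ for all the relevant indices, and on the conservative arms the bounds are exact since $U_{t,t}|_{A_0}=L_{t,t}|_{A_0}=\mu_0$.

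For $R_N$: by monotonicity of $\tilde f$ and \textbf{Lemma 1}, $f(A^*_t,\mbox{\boldmath$w$}^*_t)\le f(A^*_t,U_{t,t})$; since $A^*_t\in\Theta^K$ and the algorithm picks $B_t$ to maximize $f(\cdot,U_{t,t})$ over all of $\Theta^K$, this is at most $f(B_t,U_{t,t})=f(A_t,U_{t,t})$. Hence $R_N\le\sum_{t\in N_T}\left[f(A_t,U_{t,t})-f(A_t,\mbox{\boldmath$w$}^*_t)\right]$, which is exactly the quantity bounded in \textbf{Lemma 3}, giving $R_N\le 2PC_T\sqrt{2dn_T\log(1+n_TKL/(\lambda d))}$. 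Substituting $n_T\le T$, inserting the explicit form of $C_T$, and simplifying with $\lambda\ge L$ (so that $L/\lambda\le 1$ collapses the logarithmic factor when $TK\le d$) yields $R_N=O\!\left(d\sqrt T\,\max\{1,\log(TK/d)\}\right)$, and $R_N=O(d\sqrt T)$ in the regime $TK\le d$.

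For $R_D$: each conservative round contributes at most $f(A^*_t,\mbox{\boldmath$w$}^*_t)-\mu_0\le\Delta_{max}$, so $R_D\le\Delta_{max}\,d_T$, and it remains to bound $d_T=|D_T|$ via \textbf{Lemma 6}. If $D_T=\emptyset$ we are done; otherwise apply \textbf{Lemma 6} at $\tau=\max D_T$, where $d_\tau=d_T$, $n_\tau\le n_T\le T$ and $C_\tau\le C_T$. Writing $\beta:=(\alpha\mu_0+\Delta_{min})/(\alpha\mu_0)>0$ and $S_T:=\sum_{n\in N_T}\sum_{e\in A_n}\Vert x_{n,e}\Vert^2_{V_T^{-1}}$, \textbf{Lemma 6} gives $d_T<\tfrac{1-\alpha}{\alpha}-\beta n_T+\tfrac{2PC_T}{\alpha\mu_0}\sqrt{n_T S_T}$. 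Now case on the two regimes for $S_T$: if $n_TK>d$, \textbf{Lemma 5} gives $S_T\le\tfrac{n_TKLd}{\lambda d+n_TKL}\le d$, and maximizing $-\beta x+\tfrac{2PC_T\sqrt d}{\alpha\mu_0}\sqrt x$ over $x\ge 0$ (the elementary bound $-\beta x+c\sqrt x\le c^2/(4\beta)$) gives $d_T<\tfrac{1-\alpha}{\alpha}+\tfrac{P^2C_T^2 d}{\alpha\mu_0(\alpha\mu_0+\Delta_{min})}$; expanding $C_T^2$ and keeping the dominant term gives $d_T=O\!\left(d^2\max\{1,\log(TK/d)\}\right)$. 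If instead $n_TK\le d$, \textbf{Lemma 4} (with $(1+x)^{-y}\ge 1-xy$) gives $S_T\le n_TK$, and dropping $-\beta n_T$ together with $n_T\le d/K$ gives $d_T<\tfrac{1-\alpha}{\alpha}+\tfrac{2PC_T}{\alpha\mu_0}\sqrt{K}\,n_T=O(C_Td/\sqrt K)$, which for $TK\le d$ (where $C_T=O(\sqrt d)$) becomes $O(d^{3/2}/\sqrt K)$. Combining $R_N$ with $R_D=\Delta_{max}d_T$ over these regimes yields the two stated bounds.

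The main obstacle is the bound on $d_T$. Two points need care. First, \textbf{Lemma 6} is self-referential, since its right-hand side involves $n_T=T-d_T$; the resolution is to observe that this right-hand side is concave in $n_T$, so we may freely maximize it over $n_T\in[0,T]$, and this is precisely the step that turns a crude $O(d\sqrt T)$ estimate into the $O(d^2)$ term via $-\beta x+c\sqrt x\le c^2/(4\beta)$. Second, obtaining the correct joint dependence on $d$ and $K$ requires using the right bound for $S_T$ in each regime — \textbf{Lemma 5} when $n_TK>d$ and \textbf{Lemma 4} when $n_TK\le d$ — and then carefully propagating the explicit expression for $C_T$ through both branches while correctly absorbing or retaining the $\log(TK/d)$ factor. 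The optimism argument for $R_N$ and the reduction $R_D\le\Delta_{max}d_T$ are routine once the confidence event of \textbf{Lemma 1} is in force.
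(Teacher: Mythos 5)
Your proposal is correct and follows essentially the same route as the paper's proof: the same decomposition into optimistic rounds $N_T$ (handled by the optimism argument and Lemma 3) and conservative rounds $D_T$ (bounded by $\Delta_{max}d_T$), with $d_T$ controlled via Lemma 6 at the last conservative round, Lemmas 4 and 5 in the two regimes, and a concave maximization over $n$ to resolve the self-reference. The only (harmless) deviation is that you coarsen the quantity $S$ to $d$ (resp.\ $n_TK$) before optimizing, which drops the paper's explicit $d/K$ term but still lands within the stated $O(\cdot)$ bounds.
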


Compared with the regret bound of order $ O(d\sqrt{T}\max\{1,\log(TK/d)\})$ for the contextual combinatorial bandits as proposed in \textbf{Lemma 3}, the upper bound of \text{CCConUCB} is added with an extra order $O(d^{2}\max\{1,\log(TK/d)\})$, which is caused by the requirement of satisfying the revenue constraint \eqref{constraint} at each round. The detailed proof of  \textbf{Theorem 1} is shown in Appendix F.


\begin{thm}
\noindent If $\lambda\ge L$, then the following regret bound is satisfied with probability at least $1 - \delta$
\begin{align}
  R(T)=&\, O(d\sqrt{T} \max\{1,\log(TK/d)\}\nonumber\\
  &+\max\{d^{2}\max\{1,\log(TK/d)\}, \nonumber\\
  &\sqrt{dK\max\{1,\log(TK/d)\}}\}),
\end{align}
where $C_{T}=\sqrt{2\log\left(\displaystyle\frac{1}{\delta}\right)+\lowercase{d}\log{\left(1+\displaystyle\frac{TKL}{\lambda d}\right)}}+\sqrt{\lambda}\uppercase{S}$.
Furthermore, if $TK\le d$, then
\begin{align}
  R(T)=&
  O(d\sqrt T+d^{3/2}/\sqrt{K}+\sqrt{Kd}).
\end{align}
\end{thm}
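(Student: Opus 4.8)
The plan is to reuse the skeleton of the proof of \textbf{Theorem 1} and add an argument that absorbs the extra uncertainty caused by only \emph{estimating} the baseline reward $\mu_0$. Throughout one conditions on the high-probability event of \textbf{Lemma 1}, now invoked for the $|A_0|\le K$ arms of the conservative action as well as for the arms of $E$; on this event all confidence bounds used by \textbf{Algorithm 2} are valid and, as already noted, the revenue constraint~\eqref{constraint} holds for every $T$, so it is enough to bound $R(T)$. Decompose $R(T)=\sum_{t\in N_T}[f(A^{*}_{t},\mbox{\boldmath$w$}^{*}_{t})-f(A_{t},\mbox{\boldmath$w$}^{*}_{t})]+\sum_{t\in D_T}[f(A^{*}_{t},\mbox{\boldmath$w$}^{*}_{t})-f(A_{0},\mbox{\boldmath$w$}^{*}_{t})]$. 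On an optimistic round $A_t=B_t$ maximizes $f(\cdot,U_{t,t})$ over $\Theta^K\ni A^{*}_t$ and $U_{t,t}\ge\mbox{\boldmath$w$}^{*}_t$ coordinatewise, so by monotonicity the per-round regret is at most $f(B_t,U_{t,t})-f(B_t,\mbox{\boldmath$w$}^{*}_t)$, which by Lipschitz continuity and \textbf{Lemma 1} is at most $2PC_{t-1}\sqrt{\sum_{e\in A_t}\|x_{t,e}\|^2_{V_{t-1}^{-1}}}$; the elliptical-potential summation of \textbf{Lemma 3} then gives $\sum_{t\in N_T}\le 2PC_T\sqrt{2dn_T\log(1+n_TKL/(\lambda d))}=O(d\sqrt{T}\max\{1,\log(TK/d)\})$. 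Each conservative round contributes at most $\Delta_{max}$, so the crucial remaining step is to bound $d_T$.

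For this I would prove an analogue of \textbf{Lemma 6} adapted to the unknown-baseline test. Let $\tau=\max D_T$, so $d_T=d_{\tau-1}+1$ and the optimistic branch of \textbf{Algorithm 2} was \emph{not} taken at round $\tau$: the lower estimate it maintains on $\sum_{s<\tau}f(A_s,\mbox{\boldmath$w$}^{*}_s)+f(B_\tau,\mbox{\boldmath$w$}^{*}_\tau)$ --- which uses $f(A_n,L_{\tau,n})$ on the optimistic rounds and the current \emph{lower} confidence bound of $\mu_0$ on the $d_{\tau-1}$ conservative rounds --- is strictly below $(1-\alpha)\tau$ times the current \emph{upper} confidence bound of $\mu_0$. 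Expanding this failed inequality, I would use \textbf{Lemma 1} and Lipschitz continuity to write $f(A_n,L_{\tau,n})\ge f(A_n,\mbox{\boldmath$w$}^{*}_n)-2PC_{\tau-1}\omega_n$ with $\omega_n=\sqrt{\sum_{e\in A_n}\|x_{n,e}\|^2_{V_{\tau-1}^{-1}}}$, replace the two running bounds on $\mu_0$ by $\mu_0\pm 2PC_{\tau-1}\omega_0$ with $\omega_0=\sqrt{\sum_{e\in A_0}\|x_{0,e}\|^2_{V_{\tau-1}^{-1}}}$, and lower bound $f(A_n,\mbox{\boldmath$w$}^{*}_n)\ge\mu_0+\Delta_{min}-2PC_{n-1}\omega'_n$ (because $B_n$ is optimistically at least as good as $A^{*}_n$, with $\omega'_n$ the analogue of $\omega_n$ using $V_{n-1}$). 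After collecting terms the $n_{\tau-1}\mu_0$ contributions cancel between the accumulated reward and the required threshold, and using $n_{\tau-1}+d_{\tau-1}=\tau-1$ together with the assumption $\alpha\mu_0+\Delta_{min}>0$ one arrives at an inequality of the shape
\[
\alpha\mu_0\,d_{\tau-1}+(\alpha\mu_0+\Delta_{min})\,n_{\tau-1}\;<\;(1-\alpha)\mu_0+2PC_{\tau-1}\Big(\sum_{n\in N_{\tau-1}}(\omega_n+\omega'_n)+\big(d_{\tau-1}+(1-\alpha)\tau\big)\,\omega_0\Big).
\]

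Two ingredients then close the argument. First, Cauchy--Schwarz turns $\sum_{n\in N_{\tau-1}}(\omega_n+\omega'_n)$ into $\sqrt{n_{\tau-1}}$ times the square roots of $\sum_{n}\sum_{e\in A_n}\|x_{n,e}\|^2_{V_{\tau-1}^{-1}}$ and $\sum_{n}\sum_{e\in A_n}\|x_{n,e}\|^2_{V_{n-1}^{-1}}$, the first being bounded by \textbf{Lemma 4} when $n_{\tau-1}K\le d$ and by \textbf{Lemma 5} (hence by $d$) otherwise, the second by the standard potential bound $O(d\max\{1,\log(TK/d)\})$; together with the optimistic-round estimate this reproduces exactly the term already present in \textbf{Theorem 1}, namely $O(d^2\max\{1,\log(TK/d)\})$ in general and $O(d^{3/2}/\sqrt K)$ when $TK\le d$. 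Second --- the genuinely new point --- since the $\le K$ feature vectors of $A_0$ are fixed and are incorporated into the design matrix on each of the $d_{\tau-1}$ conservative rounds, a trace argument like the one in the proof of \textbf{Lemma 5} gives $\omega_0^2=\sum_{e\in A_0}\|x_{0,e}\|^2_{V_{\tau-1}^{-1}}\le\min\{|A_0|/d_{\tau-1},\,|A_0|L/\lambda\}$, so that $d_{\tau-1}\omega_0\le\sqrt{Kd_{\tau-1}}$ while $(1-\alpha)\tau\,\omega_0\le(1-\alpha)\tau\sqrt{K/d_{\tau-1}}$ is either absorbed into the ``credit'' term $(\alpha\mu_0+\Delta_{min})n_{\tau-1}$ on the left once $d_{\tau-1}$ exceeds a threshold of order $P^2C_{\tau-1}^2K/(\alpha\mu_0+\Delta_{min})^2$, or already forces $d_{\tau-1}$ below that threshold. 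Substituting $C_{\tau-1}\le C_T$ with $C_T=O(\sqrt{d\max\{1,\log(TK/d)\}})$ (respectively $C_T=O(\sqrt d)$ when $TK\le d$, where one also uses the tighter \textbf{Lemma 4}) and solving the resulting inequality, which is quadratic in $\sqrt{d_{\tau-1}}$, gives $d_T=O(d\sqrt T\max\{1,\log(TK/d)\}+\max\{d^2\max\{1,\log(TK/d)\},\sqrt{dK\max\{1,\log(TK/d)\}}\})$, and $d_T=O(d\sqrt T+d^{3/2}/\sqrt K+\sqrt{Kd})$ when $TK\le d$; adding the optimistic-round bound yields the claimed $R(T)$. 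I expect the delicate part to be precisely this last step: the baseline-estimation error enters the revenue test multiplied by the round index $t$, and keeping it sublinear requires trading it against the amount of conservative exploration $d_t$ (which controls $\omega_0$) and against the per-round credit $\alpha\mu_0+\Delta_{min}>0$ accumulated on the optimistic rounds --- together with a careful bookkeeping of the numerous confidence-width terms to land on exactly the stated $\max\{d^2\max\{1,\log(TK/d)\},\sqrt{dK\max\{1,\log(TK/d)\}}\}$ overhead.
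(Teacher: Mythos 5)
Your overall skeleton (the regret decomposition, \textbf{Lemma 3} for the optimistic rounds, an adapted \textbf{Lemma 6} for $d_T$, then the same optimization as in \textbf{Theorem 1}) is exactly the paper's, but your handling of the unknown baseline contains a genuine gap. Algorithm 2 does not credit the $d_{t-1}$ conservative rounds with a \emph{lower} confidence bound of $\mu_0$ while comparing against an \emph{upper} bound on the right: it uses the \emph{same} quantity $f(A_0,U_{t,0})$ on both sides of the test, so after rearranging, the baseline appears only with coefficient $d_{t-1}-(1-\alpha)t=\alpha d_{t-1}-(1-\alpha)(n_{t-1}+1)$ and the $t$-proportional amplification of the estimation error cancels identically. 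The paper then splits on the sign of $1-(1+n_{t'-1})\alpha$: in one case the argument reduces verbatim to the known-$\mu_0$ case of \textbf{Theorem 1}, and in the other $n_{t'-1}\le(1-\alpha)/\alpha$, so the width $f(A_0,U_{t',0})-\mu_0\le 2P\sqrt{KL/\lambda}\,C_{t'-1}=O\big(\sqrt{Kd\max\{1,\log(TK/d)\}}\big)$ enters with only an $O(1)$ multiplier --- which is precisely the new $\sqrt{dK\max\{1,\log(TK/d)\}}$ term in the statement. Your version instead produces the term $2PC_{\tau-1}\big(d_{\tau-1}+(1-\alpha)\tau\big)\omega_0$, which is linear in $\tau$ unless $\omega_0$ decays.

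Your rescue of that term --- $\omega_0^2\le |A_0|/d_{\tau-1}$ because the feature vectors of $A_0$ are ``incorporated into the design matrix on each of the $d_{\tau-1}$ conservative rounds'' --- is false for Algorithm 2: on conservative rounds the algorithm takes the else-branch, observes nothing, and does not update $V_t$; the design matrix is $V_t=\lambda I+\sum_{n\in N_t}\sum_{e\in A_n}x_{n,e}x_{n,e}^{\mathrm T}$, built from optimistic rounds only. The only available bound is therefore $\omega_0\le\sqrt{|A_0|L/\lambda}$, a constant, and your displayed inequality then requires $\alpha\mu_0 > 2PC_{\tau-1}(2-\alpha)\sqrt{KL/\lambda}$ in order to bound $d_{\tau-1}$ at all, which fails for large $d$ or $T$ since $C_{\tau-1}=\Theta\big(\sqrt{d\max\{1,\log(TK/d)\}}\big)$. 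A smaller deviation: the paper lower-bounds $f(A_n,U_{t,n})\ge f(A^*_n,w^*_n)\ge\mu_0+\Delta_{min}$ using the time-$t$ confidence bounds, so no $\omega'_n$ terms built from $V_{n-1}$ arise; this is exactly the point of the remark after \textbf{Lemma 5} about using $L(t,n)$ rather than $L(n,n)$, and it is what lets \textbf{Lemmas 4 and 5} (with the single matrix $V_{t'-1}$) give the $d^2$ versus $d^{3/2}/\sqrt K$ dichotomy. To repair your proof you must work with the algorithm's actual test, with $f(A_0,U_{t,0})$ on both sides.
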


The proof of \textbf{Theorem 2} is amenable to the technique used for the analysis of the \textit{CCConUCB} when the conservative reward is available and is displayed in Appendix G.

\section{Numerical Simulations}

In this section, we conduct three simulation experiments. We compare the performance of distinct algorithms in various cases, record the frequency the non-conservative algorithm disobeys the conservative constraint, and analyze the alterations of the conservative learner's preference between the conservative policy and the optimistic policy, and compare our conservative algorithm with a naive algorithm that always follows the conservative strategy.

\begin{table*}[tp]
  \centering
  \begin{threeparttable}
  \caption{The compliance of the revenue constraints of \textit{Contextual Combinatorial UCB} and the alterations of the selection preference of \textit{CCConUCB} against distinct $\alpha$.}
  \label{tab:statistics}
    \begin{tabular}{ccccc}
    \toprule
    \hline
    \multirow{2}{*}{$\alpha$ }&
    \multicolumn{2}{c}{ \textit{Contextual Combinatorial UCB}}&\multicolumn{2}{c}{ \textit{CCConUCB} } \cr
    \cmidrule(lr){2-3} \cmidrule(lr){4-5}
    & $\#$violated constraints & $\#$satisfied constraints & $\#$optimal policies & $\#$conservative policies  \cr
    \midrule
  $0.01$ & $49,993$ & $7$ & $519$ & $49,481$\cr
  $0.15$ & $18,502$ & $31,498$ & $46,698$ & $3302$\cr
  $0.3$ & $10,666$ & $39,334$ & $49,332$ & $668$\cr
  $0.6$ & $2642$ & $47,358$ & $49,752$ & $248$\cr
  $0.9$ & $1032$ & $48,968$ & $49,946$ & $54$\cr
     \hline
    \bottomrule
    \end{tabular}
    \end{threeparttable}
\end{table*}

\begin{figure}[h]
\centering
  \includegraphics[width=\linewidth, height=0.75\linewidth]{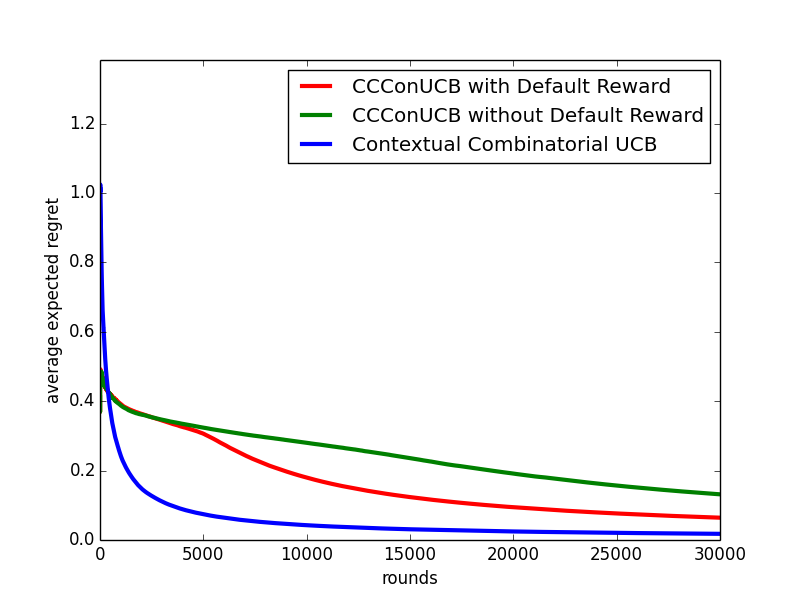}
  \caption{Average expected regret of distinct models}\label{fig:Three_Settings_Comparison}
\end{figure}

\begin{figure}[h]
\centering
  \includegraphics[width=\linewidth,height=0.75\linewidth]{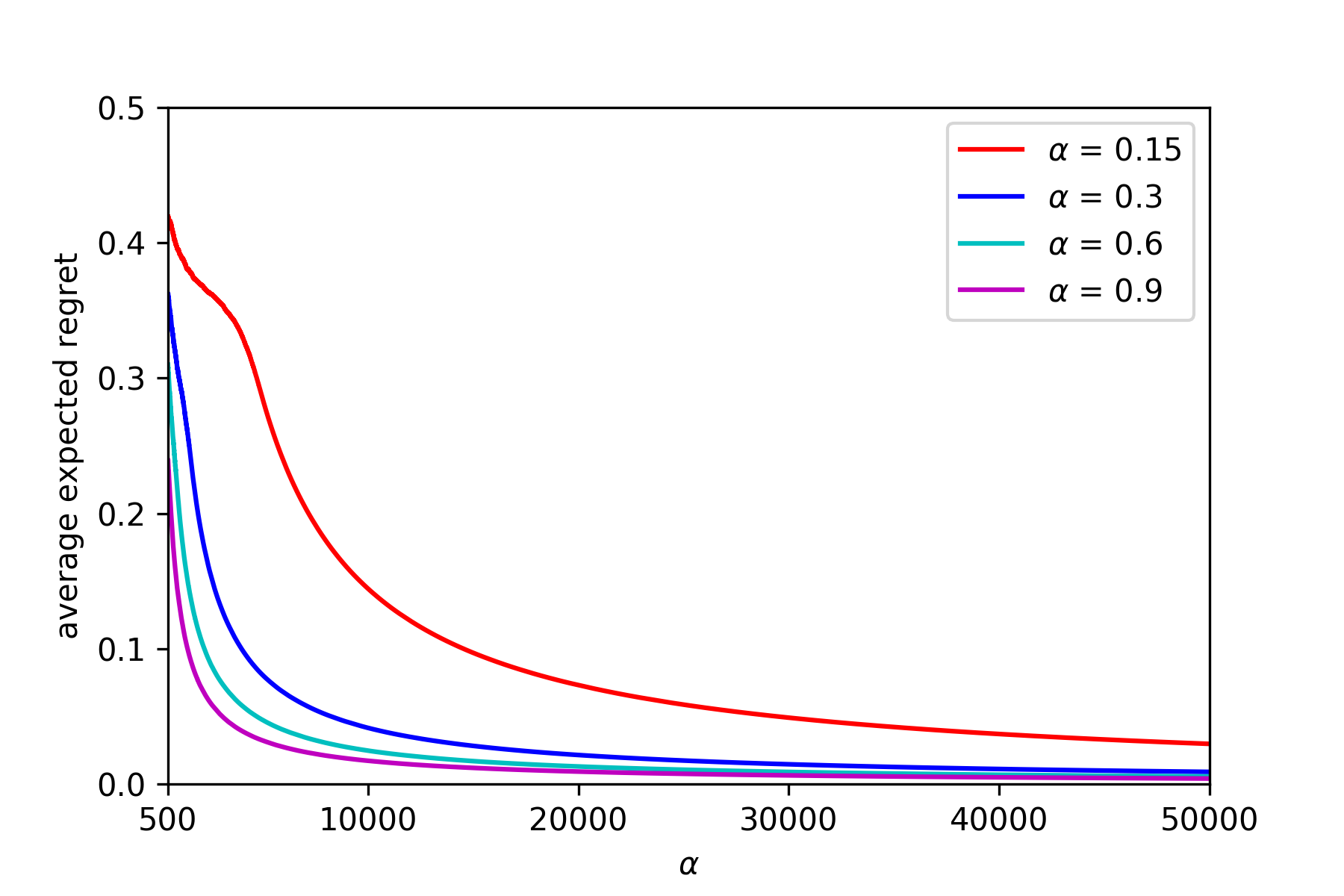}
  \caption{Average expected regret of \textit{CCConUCB}}\label{fig:CCC_WithReward_Distinct_alpha}
\end{figure}

In the first experiment, we simulated three UCB-based algorithms, which are the \textit{CCConUCB with Conservative Reward}, \textit{CCConUCB without Conservative Reward}, and \textit{$C^{2}$UCB} (\textit{Contextual Combinatorial UCB}) proposed in \cite{qin2014contextual}. The result is displayed in Figure~\ref{fig:Three_Settings_Comparison}. A total of $100$ arms are presented, each arm is associated with a 10-dimensional feature vector, and at most $2$ arms are allowed to be selected at each round. For each arm, the elements of the feature vector are randomly sampled from the uniform distribution $\mathcal{U}(-1, 1)$, satisfying that the expected reward of each arm is non-negative. The coefficient $\theta_{*}$ is randomly sampled from the normal distribution $\mathcal{N}(0,I_{10})$. The conservative coefficient is $\alpha=0.2$ for \textit{CCConUCB} in both cases. 
For the conservative arms, the feature vector of each conservative arm is randomly sampled from the uniform distribution $\mathcal{U}(-1, 1)$, satisfying that the expected reward of each arm is no less that the ninth best reward and can be up to the eighth best reward. The random noise is randomly sampled from the normal distribution $\mathcal{N}(0,1)$. As can be seen from the comparison between \textit{CCConUCB} and \textit{Contextual Combinatorial UCB}, a conservative learner might give up some opportunities for exploring more arms in order to maintain safety at each round, at the expense of a higher regret in the long run. In terms of the contextual combinatorial conservative setting, the algorithm with the information of the default reward performs better in terms of the average regret.

In the second simulation experiment, we record the total time steps \textit{Contextual Combinatorial UCB} disobeys the conservative rule in Table 1. The settings of the number of base arms, the maximum number of selected arms and the dimension of the contextual vector are identical with those in the first experiment. The smaller $\alpha$ is, the more stringent the constraint would be, leading to more violations with regard to the non-conservative algorithm. We further analyze the selection preference of \textit{CCConUCB} between optimistic policy and conservative policy for distinct conservative coefficients. Viewing from the last two columns of Table \ref{tab:statistics}, a more conservative learner tends to have a greater appetite for the conservative policy. With fewer chances for exploration, the agent learns slower about the non-conservative arms, resulting in a lower degrading speed in terms of the average expected regret, as shown in Figure~\ref{fig:CCC_WithReward_Distinct_alpha}. Consequently, although the regret of the conservative algorithm degrades at a slower speed with smaller $\alpha$, it successfully avoids a large number of constraint violations.

In the third simulation experiment, we compared the regret of the \textit{CCConUCB} with a conservative policy, where the conservative arm set is always selected at each round. We record the endurance time the \textit{CCConUCB} needs to outperform the conservative policy, the result is shown in Figure \ref{fig:endure}. Since the gap between the optimal action and the conservative action is a random sample, we divide them into three groups and take average over each group, namely low, medium and high. The result indicates that the smaller the gap is, the longer it takes for our algorithm to outperform the conservative policy. Note that the cumulative reward of \textit{CCConUCB} is not less than (1-$\alpha$) faction of that of the conservative policy, and the endurance time depends also on $\alpha$. The larger $\alpha$ is, the less stringent the requirement is. With more chances to explore, the algorithm converges at a faster speed, then it would be much easier for the \textit{CCConUCB} algorithm to outperform the pure conservative algorithm. 
\begin{figure}[h]
\centering
\includegraphics[width=\linewidth,height=0.7\linewidth]{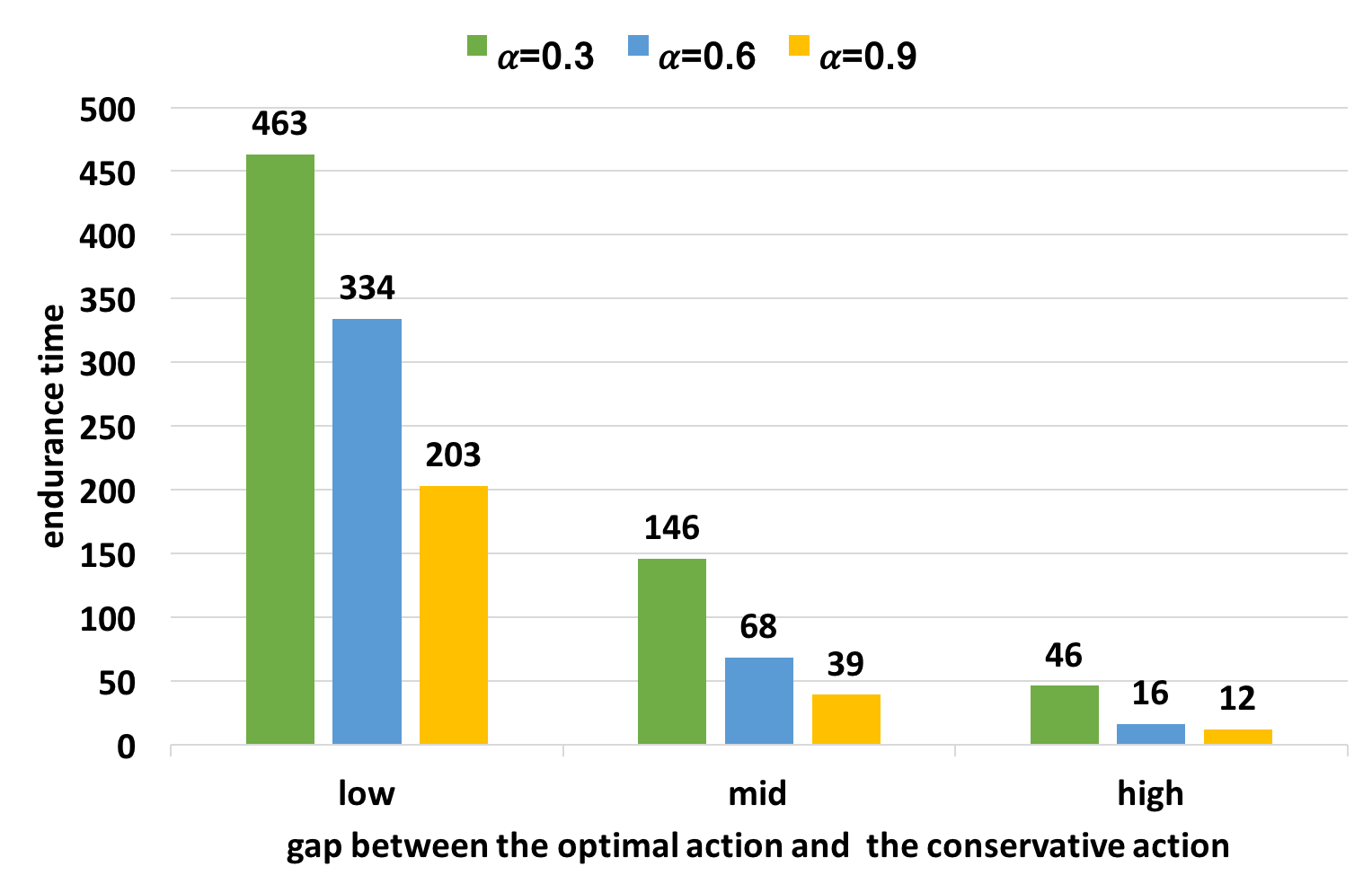} 
\caption{Endurance time to outperform the conservative policy}
\label{fig:endure}
\end{figure}


\section{Discussion and Conclusion}
We introduce a novel framework of multi-armed bandits, referred to as contextual combinatorial conservative bandits. We propose two UCB-based algorithms dealing with two cases when the conservative reward is prescribed and unknown separately, which not only maximize the expected reward in the long run, but also ensure safety at any instantaneous round. The reward $f(A,w)$ of the arm set $A$ could be as simple as a summation of the reward of each arm $a \in A$, and it can also be more complicated such as a sigmoid function. Besides, we remark that a tighter bound for high-dimensional contextual scenario is provided in the above theorem. Specifically, when reduced to the conservative contextual setting ($K=1$), and assume that $T \le d$, \textbf{Algorithm 1} achieves a regret of $O(d\sqrt{T}+d^{3/2})$, which is tighter than the order $O(d\sqrt{T}\log(T)+(d\log d)^{2})$ obtained by \cite{kazerouni2016conservative}. The baseline payoff could be time-variant or stationary, depending on the conservative learner and the specific application scenario. Our work focuses on the stationary case, i.e. the baseline payoff of the conservative learner does not change with time, while the time-variant case could be a considerable direction for future work.



\bibliographystyle{plain}
\bibliography{references}

\newtheorem *{appenlem0}{Lemma 1}
\newtheorem *{appenlemA1}{Lemma 1.1}
\newtheorem*{appenlem}{Lemma 2}
\newtheorem*{appenlem1}{Lemma 3}
\newtheorem*{appenlem2}{Lemma 6}
\newtheorem*{appenlem4}{Lemma 4}
\newtheorem*{appenlem5}{Lemma 5}
\newtheorem*{appenthm}{Theorem 2}
\newtheorem*{appenthm1}{Theorem 1}

\clearpage
\newpage

\begin{appendix}

\section{\leftline{Algorithm 2}}
\begin{algorithm}[h]
\caption{CCConUCB without Conservative Reward}
\label{alg:Contextual Combinatorial Conservative Bandits}
\begin{algorithmic}[1]
\STATE{\textbf{Input: }$\alpha,\Theta^K$}
\STATE{\textbf{Initialization: }$\hat{\theta}_{0}\leftarrow 0_{d\times 1},V_{0}\leftarrow \lambda I, Y_{0} \leftarrow 0_{d\times 1}$, $N_0 = D_0 = \emptyset$, and $H_{0}\leftarrow \sqrt{\lambda}\uppercase{S}+\sqrt{\log(1/\delta^{2})}$} 
\FOR{$t\leftarrow1,2,...,T$}
\FOR{$e\in\uppercase{A}_{0}$}\label{alg2_upstart}
\STATE{$U_{t,t,e}\leftarrow\hat{\theta}^\mathrm{T}_{t-1} x_{0,e}+ H_{t-1}\Vert{x}_{0,e}\Vert_{V_{t-1}^{-1}}$}
\STATE{$L_{t,t,e}\leftarrow \max\{0, \hat{\theta}^\mathrm{T}_{t-1} x_{0,e}-H_{t-1}\Vert{x}_{0,e}\Vert_{V_{t-1}^{-1}}\}$}
\ENDFOR\label{alg2_upend}
\FOR{$e\leftarrow1,2,...,M$}
\STATE{$U_{t,t,e}\leftarrow\hat{\theta}^\mathrm{T}_{t-1} x_{t,e}+ H_{t-1}\Vert{x}_{t,e}\Vert_{V_{t-1}^{-1}}$}
\STATE{$L_{t,t,e}\leftarrow \max\{0,\hat{\theta}^\mathrm{T}_{t-1} x_{t,e}- H_{t-1}\Vert{x}_{t,e}\Vert_{V_{t-1}^{-1}}\}$}
\ENDFOR
		\IF { $f(A_0,U_{t,t}) > \max_{A \in \Theta^K\setminus A_0} f(A,U_{t,t})$ }
		\STATE $B_t = A_0$
		\ELSE
		\STATE $B_t = \text{arg}\max_{A \in \Theta^K\setminus A_0} f(A,U_{t,t})$
		\ENDIF
\FOR{$n\in\uppercase{N}_{t-1}$ and $e\in\uppercase{A}_{n}$}
\STATE{$L_{t,n,e}\leftarrow \max\{0,\hat{\theta}^\mathrm{T}_{t-1}x_{n,e}-H_{t-1}\Vert{x}_{n,e}\Vert_{V_{t-1}^{-1}}\}$}\label{alg2_lowerbound}
\ENDFOR
\IF{$\sum\limits_{n\in\uppercase{N}_{t-1}}f(A_{n},L_{t,n})+f(B_{t},L_{t,t})+|D_{t-1}|\cdot f(A_{0},U_{t,0})\ge(1-\alpha)t f(A_{0},U_{t,0})$}
\STATE{$A_t\leftarrow\uppercase{B}_t,N_t\leftarrow N_{t-1}\cup \{t\}$}
\STATE{Observe $w_{t,e}$ for all $e\in\uppercase{A}_{t}$}
\STATE{$V_t\leftarrow\uppercase{V}_{t-1}+\sum_{e\in\uppercase{A}_{t}}{x_{t,e}x^\mathrm{T}_{t,e}}$}
\STATE{$Y_t\leftarrow\uppercase{Y}_{t-1}+\sum_{e\in\uppercase{A}_{t}}w_{t,e}x_{t,e}$}
\STATE{$\hat{\theta}_{t}\leftarrow V^{-1}_tY_t$}
\STATE{$H_{t}\leftarrow\sqrt{\lambda}\uppercase{S}+\sqrt{\log\left(\det(\uppercase{V}_{t})/\left(\lambda^{d}\delta^{2}\right)\right)}$}
\ELSE
\STATE{$A_t\leftarrow A_0,D_t\leftarrow D_{t-1}\cup \{t\}$}
\ENDIF
\ENDFOR
\end{algorithmic}
\end{algorithm}

\section{\leftline{Proof of Lemma 1}}
\begin{appenlem0}
Assume that $\Vert\theta_{*}\Vert_2\le\uppercase{S}$ and $\Vert{x}_{t,e}\Vert^{2}_2\le\uppercase{L}$, for all t and all $e\in E$. Then, for any $\delta>0$, with probability at least $1-\delta$ and for all $t\ge 1$ and $s\le t$, we have
\begin{align}
 \theta^\mathrm{T}_{*} x_{s,e}\ge\hat{\theta}^\mathrm{T}_{t-1} x_{s,e}- H_{t-1}\Vert{x}_{s,e}\Vert_{V_{t-1}^{-1}}\nonumber
\end{align}
and
\begin{align}
 \theta^\mathrm{T}_{*} x_{s,e}\le \hat{\theta}^\mathrm{T}_{t-1} x_{s,e}+ H_{t-1}\Vert{x}_{s,e}\Vert_{V_{t-1}^{-1}},\nonumber
\end{align}
where
\begin{align}
  H_{t-1}=\sqrt{\lambda}\uppercase{S}+\sqrt{\log\left(\displaystyle\frac{det(\uppercase{V}_{t-1})}{\lambda^{d}\delta^{2}}\right)}.
\end{align}
\end{appenlem0}
\begin{proof}
\begin{align}
   \left|(\theta_{*}-\hat{\theta}_{t-1})^\mathrm{T}x_{s,e}\right|&=\left|(\theta_{*}-\hat{\theta}_{t-1})^\mathrm{T}V_{t-1}^{1/2}V_{t-1}^{-1/2}x_{s,e}\right|\nonumber\\
   &\le\Vert(\theta_{*}-\hat{\theta}_{t-1})^\mathrm{T}V_{t-1}^{1/2}\Vert _{2}\cdot\Vert x_{s,e}^\mathrm{T}V_{t-1}^{-1/2}\Vert_{2}\nonumber\\
   &=\Vert(\theta_{*}-\hat{\theta}_{t-1})^\mathrm{T}\Vert_{V_{t-1}} \cdot\Vert x_{s,e}^\mathrm{T}\Vert_{V_{t-1}^{-1}}\nonumber\\
   &\le H_{t-1}\Vert{x}_{s,e}\Vert_{V_{t-1}^{-1}}\nonumber,
\end{align}
where the second inequality follows from \textbf{Theorem 2} in \cite{abbasi2011improved} (For the convenience of the reader, also recalled as Lemma 1.1 in the Appendix).
\end{proof}

\begin{appenlemA1}
\cite{abbasi2011improved} Assume that $\Vert\theta_{*}\Vert_{2}\le\uppercase{S}$, then for any $\delta>0$, with probability at least $1-\delta$, for all $t\ge 0$, we have
\begin{align}
\Vert(\theta_{*}-\hat{\theta}_{t})^\mathrm{T}\Vert_{V_{t}}\le \sqrt{\lambda}\uppercase{S}+\sqrt{\log\left(\displaystyle\frac{det(\uppercase{V}_{t})}{\lambda^{d}\delta^{2}}\right)}.\nonumber
\end{align}
\end{appenlemA1}

\section{\leftline{Proof of Lemma 2}}
\begin{appenlem}
For any $t\ge1$, $\det(V_{t}) \le (\lambda + n_{t}KL/d)^d$.
\end{appenlem}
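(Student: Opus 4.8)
The plan is to control $\det(V_t)$ through $\mathrm{trace}(V_t)$ via the arithmetic--geometric mean inequality on the eigenvalues of $V_t$. First I would unroll the update rule: since $V_t \leftarrow V_{t-1} + \sum_{e\in A_t} x_{t,e}x_{t,e}^{\mathrm T}$ only when the optimistic action is played (i.e. when $t$ is added to $N_t$), we have the accumulated form $V_t = \lambda I + \sum_{n\in N_t}\sum_{e\in A_n} x_{n,e}x_{n,e}^{\mathrm T}$, which is exactly the $V_t = \lambda I + X_tX_t^{\mathrm T}$ described in the regret-analysis section.

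Next I would bound the trace. By linearity, $\mathrm{trace}(V_t) = \mathrm{trace}(\lambda I) + \sum_{n\in N_t}\sum_{e\in A_n}\mathrm{trace}(x_{n,e}x_{n,e}^{\mathrm T}) = d\lambda + \sum_{n\in N_t}\sum_{e\in A_n}\|x_{n,e}\|_2^2$. Using the assumption $\|x_{n,e}\|_2^2 \le L$, the fact that each played super arm satisfies $|A_n|\le K$, and $|N_t| = n_t$, this gives $\mathrm{trace}(V_t) \le d\lambda + n_t K L$.

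Finally, letting $\lambda_1,\dots,\lambda_d > 0$ be the eigenvalues of the positive definite matrix $V_t$, AM--GM yields $\det(V_t) = \prod_{i=1}^d \lambda_i \le \left(\tfrac1d\sum_{i=1}^d\lambda_i\right)^d = \left(\tfrac{\mathrm{trace}(V_t)}{d}\right)^d$, and combining with the trace bound gives $\det(V_t)\le\left(\lambda + n_t K L/d\right)^d$, as claimed. There is essentially no real obstacle here; the only points requiring a moment of care are verifying that exactly the rounds in $N_t$ contribute to $V_t$ (so the count $n_t K$ is correct) and that each such round contributes at most $K$ rank-one terms of squared norm at most $L$.
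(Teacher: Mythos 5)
Your proposal is correct and follows exactly the same route as the paper's proof: express $V_t=\lambda I+\sum_{n\in N_t}\sum_{e\in A_n}x_{n,e}x_{n,e}^{\mathrm T}$, bound $\mathrm{trace}(V_t)\le \lambda d+n_tKL$ using $\Vert x_{n,e}\Vert_2^2\le L$ and $|A_n|\le K$, and conclude via the AM--GM inequality $\det(V_t)\le(\mathrm{trace}(V_t)/d)^d$. No gaps; nothing further is needed.
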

\begin{proof}
Denote the eigenvalues of $\uppercase{V}_{t}$ as $\lambda_{1},\lambda_{2}\dots,\lambda_{d}$, then
\begin{align}
  \det(\uppercase{V}_{t})&=\lambda_{1}\times\lambda_{2}\times\dots\lambda_{d}\nonumber\\
               &\le\left(\displaystyle\frac{\lambda_{1}+\lambda_{2}+\dots+\lambda_{d}}{d}\right)^{d}\nonumber\\
               &=\left(\textrm{trace}\left(\uppercase{V}_{t}\right)/d\right)^d.\label{eq:lem11}
\end{align}

\noindent Since $V_{t}$ can be represented as
$V_{t} =\lambda I+\sum\limits_{s\in\uppercase{N}_{t}}\sum\limits_{e\in\uppercase{A}_{s}}{{x}_{s,e}{x}_{s,e}^\mathrm{T}}$, it gives that
\begin{align}
\textrm{trace}(V_{t})&=\textrm{trace}(\lambda I)+\sum\limits_{s\in\uppercase{N}_{t}}\sum\limits_{e\in\uppercase{A}_{s}}{\textrm{trace}({x}_{s,e}{x}_{s,e}^\mathrm{T})}\nonumber\\
&=\lambda d+\sum\limits_{s\in\uppercase{N}_{t}}\sum\limits_{e\in\uppercase{A}_{s}}{\Vert x_{s,e} \Vert_{2}^2}\nonumber\\
&\le \lambda d+ n_{t}KL.\label{eq:lem12}
\end{align}
Combining \eqref{eq:lem11} and \eqref{eq:lem12}, we obtain
\begin{align}
\det(V_{t}) \le (\lambda + n_{t}KL/d)^d.
\end{align}
\end{proof}

\section{\leftline{Proof of Lemma 3}} 
\begin{appenlem1}
\noindent Assume that $\Vert\theta_{*}\Vert_2\le\uppercase{S}$ and $\Vert{x}_{t,e}\Vert^{2}_2\le\uppercase{L}$, for any $t\ge 1$ and any $e\in\uppercase{A}_t$. If $\lambda\ge L$, then for any $\delta$,with probability at least $1-\delta$ and for any $T\ge 1$, the regret bound for selecting the optimistic policy is
\begin{align}
  &\sum\limits_{t\in\uppercase{N}_{T}}\left[f(A_{t},\uppercase{U}_{t})-f(A_{t},w^{*}_{t})\right]\nonumber\\
  &\le 2PC_{T}\sqrt{2dn_{T}\log\left(1+\displaystyle\frac{n_{T}KL}{d\lambda}\right)},\nonumber
\end{align}
where $C_{T}=\sqrt{\lambda}\uppercase{S}+\sqrt{2\log\left(\displaystyle\frac{1}{\delta}\right)+\lowercase{d}\log{\left(1+\displaystyle\frac{TKL}{\lambda d}\right)}}$.
\end{appenlem1}
\begin{proof}
According to the property claimed in \textbf{Lemma 1}, we know that $\uppercase{U}_{t}\ge w^{*}_{t}$, combined with the monotonicity of the reward function, we have $f(A^{*}_{t},w^{*}_{t})\le f(A^{*}_{t},\uppercase{U}_{t})$. Since $\uppercase{A}_{t}=\operatorname{argmax}_{\uppercase{A}\in\Theta}{f(A,\uppercase{U}_{t})}$, we also have $f(A^{*}_{t},\uppercase{U}_{t})\le f(A_{t},\uppercase{U}_{t})$.
The cumulative regret of selecting the optimistic policy can be bounded as follows,
\begin{align}
  &\sum\limits_{t\in\uppercase{N}_{T}}\left[f(A_{t},\uppercase{U}_{t})-f(A_{t},w^{*}_{t})\right]\nonumber\\
  &\le P\sum\limits_{t\in\uppercase{N}_{T}}\sqrt{\sum\limits_{e\in\uppercase{A}_{t}}{4C^{2}_{t}\Vert{x}_{t,e}\Vert^{2}_{V_{t-1}^{-1}}}}\nonumber\\
  &\le P\sqrt{\sum\limits_{t\in\uppercase{N}_{T}}\sum\limits_{e\in\uppercase{A}_{t}}{4n_{T}C^{2}_{t}\Vert{x}_{t,e}\Vert^{2}_{V_{t-1}^{-1}}}}\nonumber\\
  &\le 2PC_{T}\sqrt{\sum\limits_{t\in\uppercase{N}_{T}}\sum\limits_{e\in\uppercase{A}_{t}}{n_{T}\Vert{x}_{t,e}\Vert^{2}_{V_{t-1}^{-1}}}}.\nonumber
\end{align}
If $\lambda\ge L$, then
$\Vert{x}_{t,e}\Vert^{2}_{V_{t-1}^{-1}}\le\displaystyle\frac{\Vert{x}_{t,e}\Vert^{2}_{2}}{\lambda}\le\displaystyle\frac{L}{\lambda}\le1$.
Since $x\le2\log(1+x)$ for $x\in[0,1]$, we have
\begin{align}
&\sum\limits_{t\in\uppercase{N}_{T}}\sum\limits_{e\in\uppercase{A}_{t}}{\Vert{x}_{t,e}\Vert^{2}_{V_{t-1}^{-1}}}\nonumber\\
&\le\sum\limits_{t\in\uppercase{N}_{T}}2\log\left(1+\sum\limits_{e\in\uppercase{A}_{t}}{\Vert{x}_{t,e}\Vert^{2}_{V_{t-1}^{-1}}}\right).\label{eq:46}
\end{align}
Denote the rounds selecting the optimistic arm sets as $t_{1},t_{2},\cdots, t_{n_{T}}$. From the recursion formula of $V_{t}$, we have
\begin{align}
&\det( V_{t_{n_{T}}})=\det( V_{t_{n_{T}-1}}+\sum\limits_{e\in\uppercase{A}_{t_{n_{T}}}}{{x}_{t_{n_{T}},e}({x}_{t_{n_{T}},e})^\mathrm{T}})\nonumber\\
&=\det( V_{t_{n_{T}-1}})\nonumber\\
 &\quad\cdot det\bigg(I+\sum\limits_{e\in\uppercase{A}_{t_{n_{T}}}}{(V_{t_{n_{T}-1}}^{-1/2}{x}_{t_{n_{T}},e})(V_{t_{n_{T}-1}}^{-1/2}{x}_{t_{n_{T}},e})^\mathrm{T}}\bigg)\nonumber\\
&\ge\det( V_{t_{n_{T}-1}})\left(1+\sum\limits_{e\in\uppercase{A}_{t_{n_{T}}}}{\Vert{x}_{t_{n_{T}},e}\Vert^{2}_{V_{t_{n_{T}-1}}^{-1}}}\right)\label{eq:app22}\\
&\ge\det( V_{t_{1}-1}) \prod_{t\in\uppercase{N}_{T}}\left(1+\sum\limits_{e\in\uppercase{A}_{t}}{\Vert{x}_{t,e}\Vert^{2}_{V_{t-1}^{-1}}}\right),\nonumber
\end{align}
from \eqref{eq:app22} we know that $\det(V_{t}) \ge \det(V_{t-1})$ for any $t\ge1$, combined with the property that $1\le t_{1}\le t_{n_{T}}\le T$, we have
\begin{align}
&\prod_{t\in\uppercase{N}_{T}}\left(1+\sum\limits_{e\in\uppercase{A}_{t}}{\Vert{x}_{t,e}\Vert^{2}_{V_{t-1}^{-1}}}\right)\label{eq:47}
\le \det(V_{T})/\det(V_{0}).
\end{align}
Combine \eqref{eq:46} and \eqref{eq:47}, we have
\begin{align}
&\sum\limits_{t\in\uppercase{N}_{T}}\sum\limits_{e\in\uppercase{A}_{t}}{\Vert{x}_{t,e}\Vert^{2}_{V_{t-1}^{-1}}}
\le2\log(\det(V_{T}))-2\log(\det(\lambda I))\label{eq:31}.
\end{align}
In addition, \textbf{Lemma 2} informs us that
\begin{align}
\det(V_{T}) \le (\lambda d+ n_{T}KL/d)^d.\label{eq:32}
\end{align}
Combine \eqref{eq:31} and \eqref{eq:32} gives
\begin{align}
&\sum\limits_{t\in\uppercase{N}_{T}}\sum\limits_{e\in\uppercase{A}_{t}}{\Vert{x}_{t,e}\Vert^{2}_{V_{t-1}^{-1}}}
\le 2d\log\left(1+\displaystyle\frac{n_{T}KL}{d\lambda}\right).
\end{align}
Thus,
\begin{align}
  &\sum\limits_{t\in\uppercase{N}_{T}}\left[f(A_{t},\uppercase{U}_{t})-f(A_{t},w^{*}_{t})\right]\nonumber\\
  &\le 2PC_{T}\sqrt{2dn_{T}\log\left(1+\displaystyle\frac{n_{T}KL}{d\lambda}\right)}.\nonumber
\end{align}
\end{proof}

\section{\leftline{Proof of Lemma 6}}
\begin{appenlem2}
For any $t\in\uppercase{D}_{T}$, we have
\begin{align}
  &d_{t}<{\left(\left[1-(1+n_{t})\alpha\right]\mu_0-n_{t}\Delta_{min}\right)/(\alpha\mu_0)}\nonumber\\
  &+{2PC_{t}\sqrt{n_{t}\sum\limits_{n\in\uppercase{N}_{t}}\sum\limits_{e\in\uppercase{A}_{n}}{\Vert{x}_{n,e}\Vert^{2}_{V_{t}^{-1}}}}/(\alpha\mu_0)},\nonumber
\end{align}
where $C_{t}=\sqrt{\lambda}\uppercase{S}+\sqrt{2\log\left(1/\delta\right)+\lowercase{d}\log{\left(1+\displaystyle\frac{KLt}{\lambda d}\right)}}$.
\end{appenlem2}
\begin{proof}
Suppose the conservative arm set $A_{0}$ is selected at round $t$, then according to \textbf{Algorithm 1}, it is satisfied that
\begin{align}
  \sum_{n\in\uppercase{N}_{t-1}}f(A_{n},L_{t,n})&+f(B_{t},L_{t,t})\nonumber\\
        &+d_{t-1}\mu_{0}<(1-\alpha)t\mu_0\label{eq:aa}
\end{align}

\noindent Note that $t$ can be denoted as $t=n_{t-1}+d_{t-1}+1$. By dropping $f(B_{t},L_{t,t})$, and rearranging the terms in \eqref{eq:aa}, we have
\begin{align}
  &\alpha d_{t-1}\mu_0\nonumber\\
  &<\left[1-(1+n_{t-1})\alpha\right]\mu_0+n_{t-1}\mu_0-\sum\limits_{n\in\uppercase{N}_{t-1}}f(A_{n},L_{t,n})\nonumber\\
  &=\left[1-(1+n_{t-1})\alpha\right]\mu_0+\sum\limits_{n\in\uppercase{N}_{t-1}}\Big[\mu_0-f(A_{n},U_{t,n})\nonumber\\
  &\quad+f(A_{n},U_{t,n})-f(A_{n},L_{t,n})\Big]\nonumber\\
  &\le\left[1-(1+n_{t-1})\alpha\right]\mu_0+\sum\limits_{n\in\uppercase{N}_{t-1}}\Big[\mu_{0}-f(A_{n},U_{t,n})\nonumber\\
  &\quad+P\sqrt{\sum\limits_{e\in\uppercase{A}_{n}}\left(U_{t,n,e}-L_{t,n,e}\right)^2}\Big]\label{eq:le51}\\
  &=\left[1-(1+n_{t-1})\alpha\right]\mu_0+\sum\limits_{n\in\uppercase{N}_{t-1}}\Big[\mu_{0}-f(A_{n},U_{t,n})\nonumber\\
  &\quad+2PC_{t-1}\sqrt{\sum\limits_{e\in\uppercase{A}_{n}}{\Vert{x}_{n,e}\Vert_{V_{t-1}^{-1}}^2}}\Big]\label{eq:le53}\\
  &\le\left[1-(1+n_{t-1})\alpha\right]\mu_0-n_{t-1}\Delta_{min}\nonumber\\
  &\quad+2PC_{t-1}\sqrt{n_{t-1}\sum\limits_{n\in\uppercase{N}_{t-1}}\sum\limits_{e\in\uppercase{A}_{n}}{\Vert{x}_{n,e}\Vert^{2}_{V_{t-1}^{-1}}}}\label{eq:le52},
\end{align}
where the \eqref{eq:le51} is due to the lipschitz continuity of the reward function, \eqref{eq:le53} follows from \textbf{Lemma 1}, and \eqref{eq:le52} follows from Cauchy-Schwarz inequality.
\end{proof}

\section{\leftline{Proof of Theorem 1}}
\begin{appenthm1}
\noindent If $\lambda\ge L$, then the following regret bound is satisfied with probability at least $1 - \delta$
\begin{align}
  R(T)< &2PC_{T}\sqrt{2dT\log\left(1+TKL/\left(\lambda d\right)\right)}\nonumber\\
  &\quad+\Big[1/\alpha+\left(\alpha\mu_{0}+\Delta_{min}\right)\lambda d/\left(\alpha\mu_{0}KL\right)\nonumber\\
  &\quad+4P^{2}C^{2}_{T-1}d/(3\alpha\mu_{0}\left(\alpha\mu_{0}+\Delta_{min}\right))\Big]\Delta_{max},\nonumber\\
  =&
  O\left(d(d+\sqrt{T})\max\{1,\log(\displaystyle\frac{TK}{d})\}+\displaystyle\frac{d}{K}\right),\nonumber
\end{align}
where
$C_{T}=\sqrt{\lambda}\uppercase{S}+\sqrt{2\log\left(1/\delta\right)+\lowercase{d}\log{\left(1+\displaystyle\frac{TKL}{\lambda d}\right)}}\nonumber$.
Furthermore, for $TK\le d$, we have
\begin{align}
  R(T)<&2PC_{T}\sqrt{2dT\log\left(1+TKL/\left(\lambda d\right)\right)}\nonumber\\
  &\quad+\Big[\displaystyle\frac{1}{\alpha}+\displaystyle\frac{2Pd}{\alpha\mu_{0}}\sqrt{\displaystyle\frac{L}{K\left(\lambda+L\right)}}C_{T-1}\Big]\Delta_{max},\nonumber\\
  =&
  O\left(d\sqrt T+d^{3/2}/\sqrt K\right).\nonumber
\end{align}
\end{appenthm1}
\begin{proof}
Suppose $t^{'}$ is the last round the conservative policy is played, then
\begin{align}
  &d_{t^{'}-1}=d_{T}-1,
\end{align}
and by \textbf{Lemma 6} we have
\begin{align}
&\alpha d_{t^{'}-1}\mu_0\nonumber\\
&<\left[1-(1+n_{t^{'}-1})\alpha\right]\mu_0-n_{t^{'}-1}\Delta_{min}\nonumber\\
  &\quad+2PC_{t^{'}-1}\sqrt{n_{t^{'}-1}\sum\limits_{n\in\uppercase{N}_{t^{'}-1}}\sum\limits_{e\in\uppercase{A}_{n}}{\Vert{x}_{n,e}\Vert^{2}_{V_{t^{'}-1}^{-1}}}}\nonumber\\
&<\left[1-(1+n_{t^{'}-1})\alpha\right]\mu_0-n_{t^{'}-1}\Delta_{min}\nonumber\\
  &\quad+2PC_{t^{'}-1}n_{t^{'}-1}\sqrt{\displaystyle\frac{KLd}{\lambda d + n_{t^{'}-1}KL}},\label{eq:ae}
\end{align}
where \eqref{eq:ae} follows from \textbf{Lemma 5}.

\noindent Define
\begin{align}
 f(x)=-\left(\alpha\mu_{0}+\Delta_{min}\right)x+2PC_{T-1}x\sqrt{\displaystyle\frac{KLd}{\lambda d+KLx}},\nonumber
\end{align}
denote
$\left(\alpha\mu_{0}+\Delta_{min}\right)/KL=A$, $2PC_{T-1}\sqrt{\displaystyle\frac{d}{KL}}=B$,
then
\begin{align}
f(x)=-AKLx+\displaystyle\frac{BKLx}{\sqrt{KLx+\lambda d}},\nonumber
\end{align}
with the substitution $u=KLx+\lambda d$, we have
\begin{align}
f(u)=-Au+A\lambda d+B\sqrt{u}-\displaystyle\frac{B\lambda d}{\sqrt{u}}\label{eq:ac}.
\end{align}
Note that
\begin{align}
 f^{''}(u)=-\displaystyle\frac{B}{4}u^{-3/2}-\displaystyle\frac{3B\lambda d}{4}u^{-5/2}<0,\nonumber
\end{align}
which signifies that the global maximum of $\lowercase{f}$ could be reached at the unique stationary point.
Set the first order $f^{'}\left(t\right)=0$ gives
\begin{align}
\displaystyle\frac{B\lambda d}{\sqrt{u}}=2At-B\sqrt{u}\label{eq:ah}.
\end{align}
Substitute \eqref{eq:ah} into \eqref{eq:ac} gives
\begin{align}
f(u)=-3At+2B\sqrt{u}+A\lambda d
\le\uppercase{A}\lambda d+\displaystyle\frac{B^{2}}{3A}.\nonumber
\end{align}
Thus,
\begin{align}
 &d_{T}=d_{t^{'}-1}+1\nonumber\\
 &<1/\alpha+\left(\alpha\mu_{0}+\Delta_{min}\right)\lambda d/\left(\alpha\mu_{0}KL\right)\nonumber\\
 &\quad+\displaystyle\frac{4P^{2}C^{2}_{T-1}d}{3\left(\alpha\mu_{0}+\Delta_{min}\right)\alpha\mu_{0}}\label{eq:th121}.
\end{align}
According to the property claimed in \textbf{Lemma 1}, we know that $\uppercase{U}_{t}\ge w^{*}_{t}$, combined with the monotonicity of the reward function, we have $f(A^{*}_{t},w^{*}_{t})\le f(A^{*}_{t},\uppercase{U}_{t})$. Since $\uppercase{A}_{t}=\operatorname{argmax}_{\uppercase{A}\in\Theta}{f(A,\uppercase{U}_{t})}$, we also have $f(A^{*}_{t},\uppercase{U}_{t})\le f(A_{t},\uppercase{U}_{t})$.

\noindent After running \textbf{Algorithm 1} for $T$ rounds, the cumulative regret can be bounded as follows,
\begin{align}
  R(T)&=\sum\limits_{t=1}^{T}\left[f(A^{*}_{t},w^{*}_{t})-f(A_{t},w^{*}_{t})\right]\nonumber\\
  &=\sum\limits_{t\in\uppercase{N}_{T}}\left[f(A^{*}_{t},w^{*}_{t})-f(A_{t},w^{*}_{t})\right]\nonumber\\
  &\quad+\sum\limits_{t\in\uppercase{D}_{T}}\left[f(A^{*}_{t},w^{*}_{t})-\mu_0\right]\nonumber\\
  &\le\sum\limits_{t\in\uppercase{N}_{T}}\left[f(A_{t},\uppercase{U}_{t})-f(A_{t},w^{*}_{t})\right]\nonumber\\
  &\quad+\sum\limits_{t\in\uppercase{D}_{T}}\left[f(A^{*}_{t},w^{*}_{t})-\mu_0\right]\\
  &\le\sum\limits_{t\in\uppercase{N}_{T}}\left[f(A_{t},\uppercase{U}_{t})-f(A_{t},w^{*}_{t})\right]+d_{T}\Delta_{max}\label{eq:th112}\\
  &< 2PC_{T}\sqrt{2dT\log\left(1+\displaystyle\frac{TKL}{d\lambda}\right)}\nonumber\\
  &\quad+\Big[1/\alpha+ \left(\alpha\mu_{0}+\Delta_{min}\right)\lambda d/\left(\alpha\mu_{0}KL\right)\nonumber\\
  &\quad+\displaystyle\frac{4P^{2}C^{2}_{T-1}d}{3\left(\alpha\mu_{0}+\Delta_{min}\right)\alpha\mu_{0}}\Big]\Delta_{max}\label{eq:th113},
\end{align}
where the first part of \eqref{eq:th113} follows from \textbf{Lemma 3} and the fact that $n_T\le T$, and the second part of \eqref{eq:th113} follows from \eqref{eq:th121}.

\noindent Furthermore, if $TK\le d$, then it implies that $n_{t^{'}}K\le d$, we apply \textbf{Lemma 4} and have
\begin{align}
&\alpha d_{t^{'}-1}\mu_0\nonumber\\
&<\left[1-(1+n_{t^{'}-1})\alpha\right]\mu_0-n_{t^{'}-1}\Delta_{min}+\nonumber\\
&2PC_{t^{'}-1}n_{t^{'}-1}\sqrt{K\left[1-(\displaystyle\frac{\lambda}{\lambda+n_{t^{'}-1}KL/d})^{d/\left(n_{t^{'}-1}K\right)}\right]}\nonumber\\
&\le\left(1-\alpha\right)\mu_{0}+2Pd\sqrt{\displaystyle\frac{L}{K\left(\lambda+L\right)}}C_{t^{'}-1}.\nonumber
\end{align}

\noindent Thus,
\begin{align}
 d_{T}&=d_{t^{'}-1}+1\nonumber\\
 &<\displaystyle\frac{1}{\alpha}+\displaystyle\frac{2Pd}{\alpha\mu_{0}}\sqrt{\displaystyle\frac{L}{K\left(\lambda+L\right)}}C_{T-1}\label{eq:th1101}.
\end{align}

\noindent Combine \eqref{eq:th1101} with \eqref{eq:th112} we have
\begin{align}
  R(T)< &2PC_{T}\sqrt{2dT\log\left(1+\displaystyle\frac{TKL}{d\lambda}\right)}\nonumber\\
  &\quad+\Big[1/\alpha+\displaystyle\frac{2Pd}{\alpha\mu_{0}}\sqrt{\displaystyle\frac{L}{K\left(\lambda+L\right)}}C_{T-1}\Big]\Delta_{max}.\nonumber
\end{align}
\end{proof}

\section{\leftline{Proof of Theorem 2}}
\begin{appenthm}
\noindent If $\lambda\ge L$, then the following regret bound is satisfied with probability at least $1 - \delta$
\begin{align}
  R(T)< &2PC_{T}\sqrt{2dT\log\left(1+TKL/\left(d\lambda\right)\right)}\nonumber\\
  &+max\{\Big[1/\alpha+2\left(1-\alpha\right)PC_{T-1}/\left(\alpha\mu_{0}\right)\sqrt{KL/\lambda}\nonumber\\
  &+\left(\displaystyle\frac{\alpha\mu_{0}+\Delta_{min}}{\alpha\mu_{0}KL}+
  \displaystyle\frac{2\alpha PC_{T-1}}{\alpha\mu_{0}\sqrt{KL\lambda}}\right)\lambda d\nonumber\\
 &+\displaystyle\frac{4P^{2}C^{2}_{T-1}d}{3\left(\alpha\mu_{0}+\Delta_{min}+2\alpha P\sqrt{KL/\lambda}C_{T-1}\right)\alpha\mu_{0}}\Big],\nonumber\\
 &\Big[1/\alpha+\left(\alpha\mu_{0}+\Delta_{min}\right)\lambda d/\left(\alpha\mu_{0}KL\right)\nonumber\\
  &+\displaystyle\frac{4P^{2}C^{2}_{T-1}d}{3\left(\alpha\mu_{0}+\Delta_{min}\right)\alpha\mu_{0}}\Big]\}\Delta_{max},\nonumber\\
  =&\, O(d\sqrt{T} \max\{1,\log(TK/d)\}\nonumber\\
  &+\max\{d^{2}\max\{1,\log(TK/d)\}, \nonumber\\
  &\sqrt{dK\max\{1,\log(TK/d)\}}\}),\nonumber
\end{align}
where $C_{T}=\sqrt{2\log\left(\displaystyle\frac{1}{\delta}\right)+\lowercase{d}\log{\left(1+\displaystyle\frac{TKL}{\lambda d}\right)}}+\sqrt{\lambda}\uppercase{S}$.
Furthermore, if $TK\le d$, then
\begin{align}
  &R(T)< 2PC_{T}\sqrt{2dT\log\left(1+TKL/\left(d\lambda\right)\right)}\nonumber\\
  &+\Big[1/\alpha+2P\left(1-\alpha\right)\sqrt{KL/\lambda}C_{T-1}/\left(\alpha\mu_{0}\right)\nonumber\\
  &+2Pd/\left(\alpha\mu_{0}\right)\sqrt{L/\left(K\left(\lambda+L\right)\right)}C_{T-1}\Big]\Delta_{max},\nonumber\\
  &=
  O(d\sqrt T+d^{3/2}/\sqrt{K}+\sqrt{Kd}).\nonumber
\end{align}
\end{appenthm}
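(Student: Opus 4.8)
The plan is to follow the three-part structure of the proof of \textbf{Theorem 1} in Appendix F, changing only the ingredients that depend on the now-unknown conservative reward $\mu_0$, which \textbf{Algorithm 2} replaces throughout its safety test by the upper confidence value $f(A_0,U_{t,0})$. First I would check that \textbf{Algorithm 2} respects the revenue constraint \eqref{constraint}: on the $(1-\delta)$-event of \textbf{Lemma 1} we have $f(A_0,U_{t,0})\ge f(A_0,w^*_0)=\mu_0$, and, by monotonicity, $f(A_n,L_{t,n})\le f(A_n,w^*_n)$ and $f(B_t,L_{t,t})\le f(B_t,w^*_t)$, so whenever the algorithm's test passes at round $t$ the true left side of \eqref{constraint} is at least $(1-\alpha)t\,f(A_0,U_{t,0})\ge(1-\alpha)t\mu_0$; hence the constraint holds with probability at least $1-\delta$, and safety costs nothing extra in the regret analysis.

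The crux is an analogue of \textbf{Lemma 6}. If $A_0$ is played at round $t$, the test fails, so $\sum_{n\in N_{t-1}}f(A_n,L_{t,n})+f(B_t,L_{t,t})+d_{t-1}f(A_0,U_{t,0})<(1-\alpha)t\,f(A_0,U_{t,0})$; I would drop $f(B_t,L_{t,t})\ge 0$, use $t=n_{t-1}+d_{t-1}+1$, and rearrange exactly as in \eqref{eq:le51}--\eqref{eq:le52}. The only genuinely new quantity is the estimation error $f(A_0,U_{t,0})-\mu_0$, which Lipschitz continuity together with \textbf{Lemma 1} bounds by $2PC_{t-1}\sqrt{\sum_{e\in A_0}\Vert x_{0,e}\Vert_{V_{t-1}^{-1}}^2}\le 2PC_{t-1}\sqrt{KL/\lambda}$, since $|A_0|\le K$ and $\Vert x_{0,e}\Vert_{V_{t-1}^{-1}}^2\le\Vert x_{0,e}\Vert_2^2/\lambda\le L/\lambda$. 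Carrying this per-round error through the rearrangement perturbs the effective gap $\alpha\mu_0+\Delta_{min}$ into $\alpha\mu_0+\Delta_{min}+2\alpha P\sqrt{KL/\lambda}\,C_{T-1}$ in the dominant term and adds an $O(PC_{T-1}\sqrt{KL/\lambda})$ offset; in the regime where this correction is not helpful one falls back to the plain \textbf{Theorem 1} estimate of $d_T$, which explains the outer $\max$ of the two expressions in the statement.

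Then I would feed the resulting inequality into the same calculus argument as in Appendix F: bound $\sum_{n\in N_t}\sum_{e\in A_n}\Vert x_{n,e}\Vert_{V_t^{-1}}^2$ by \textbf{Lemma 5}, write the right side as a function of $x=n_{t-1}$, substitute $u=KLx+\lambda d$, verify that the second derivative is negative, solve the first-order condition, and bound the value at the stationary point by $A\lambda d+B^2/(3A)$ with the updated constants $A,B$; for $TK\le d$ I would use \textbf{Lemma 4} instead, exactly as in the second half of Appendix F. This yields the displayed bound on $d_T$; note that $\sqrt{KL/\lambda}\,C_{T-1}=O(\sqrt{dK\log(TK/d)})$, which is the origin of the new $\sqrt{dK\log(TK/d)}$ and $\sqrt{Kd}$ summands. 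Finally, the regret decomposition is identical to that for \textbf{Theorem 1}: $R(T)=\sum_{t\in N_T}[f(A^*_t,w^*_t)-f(A_t,w^*_t)]+\sum_{t\in D_T}[f(A^*_t,w^*_t)-\mu_0]\le 2PC_T\sqrt{2dT\log(1+TKL/(d\lambda))}+d_T\Delta_{max}$, the first sum by optimism, \textbf{Lemma 1} and \textbf{Lemma 3} (with $n_T\le T$), the second by the definition of $\Delta_{max}$; substituting the bound on $d_T$ gives the two claimed regret bounds.

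I expect the second step to be the main obstacle. One must carry the $\mu_0$-estimation error $2PC_{t-1}\sqrt{KL/\lambda}$ through the rearrangement and through the case split on the sign of $1-(1+n_{t-1})\alpha$ carefully enough that the perturbed denominator $\alpha\mu_0+\Delta_{min}+2\alpha P\sqrt{KL/\lambda}\,C_{T-1}$ remains strictly positive (which follows from $\alpha\mu_0+\Delta_{min}>0$) so that the maximisation of the helper function is still valid, and to pin down the exact $K$- and $d$-dependence of each resulting term; everything else is a direct transcription of the arguments already carried out for \textbf{Lemma 3}, \textbf{Lemma 6}, and \textbf{Theorem 1}.
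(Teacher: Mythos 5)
Your proposal is correct and follows essentially the same route as the paper's Appendix G: the same reduction to a Lemma 6-type bound with $f(A_0,U_{t',0})$ in place of $\mu_0$, the same Lipschitz/Lemma 1 bound $f(A_0,U_{t',0})\le\mu_0+2P\sqrt{KL/\lambda}\,C_{t'-1}$, the same case split on the sign of $1-(1+n_{t'-1})\alpha$ producing the outer $\max$, the same calculus argument with the perturbed constants $A,B$, and the same regret decomposition with Lemma 5 (or Lemma 4 when $TK\le d$). No gaps.
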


\begin{proof}
If we want to choose the optimistic arm set at round t, then the cumulative reward should not be less than certain percent of the reward gained by choosing the conservative arm set. That is,
\begin{align}
  &\sum_{n\in\uppercase{N}_{t-1}}f(A_{n},w_{n}^{*})+f(B_{t},w_{t}^{*})\nonumber\\
  &\quad+\sum_{d\in\uppercase{D}_{t-1}}f(A_{0},w_{0}^{*})-(1-\alpha)\sum\limits_{t=1}^{t}f(A_{0},w_{0}^{*})\label{eq:afd}
\end{align}
should be non-negative.

\noindent To make the selection at each round less risky, the conservative learner wants to ensure that the calculable lower bound of \eqref{eq:afd} is non-negative.

\noindent Consider that $f(A_{0},U_{t,0})\ge f(A_{0},w_{0}^{*})$, then
\begin{align}
  \sum_{n\in\uppercase{N}_{t-1}}&f(A_{n},L_{t,n})+f(B_{t},L_{t,t})\nonumber\\
  &+\left(d_{t-1}-\left(1-\alpha\right)t\right)f(A_{0},U_{t,0})\nonumber
\end{align}
is a lower bound of \eqref{eq:afd} when $d_{t-1}< (1-\alpha)t$. Suppose $t^{'}$ is the last round the conservative policy is played, then
\begin{align}
  &d_{t^{'}-1}=d_{T}-1,
\end{align}
and
\begin{align}
  &\sum_{n\in\uppercase{N}_{t^{'}-1}}f(A_{n},L_{t^{'},n})+f(B_{t^{'}},L_{t^{'},t^{'}})\nonumber\\
        &\quad+d_{t^{'}-1}f(A_{0},U_{t^{'}, 0})<(1-\alpha)t^{'}f(A_{0},U_{t^{'}, 0}).\label{eq:th21}
\end{align}
By \textbf{Lemma 6}, we have
\begin{align}
&\alpha d_{t^{'}-1}f(A_{0},U_{t^{'}, 0})\nonumber\\
&<\left[1-(1+n_{t^{'}-1})\alpha\right]f(A_{0},U_{t^{'}, 0})-n_{t^{'}-1}\Delta_{min}\nonumber\\
  &\quad+2PC_{t^{'}-1}\sqrt{n_{t^{'}-1}\sum\limits_{n\in\uppercase{N}_{t^{'}-1}}\sum\limits_{e\in\uppercase{A}_{n}}{\Vert{x}_{n,e}\Vert^{2}_{V_{t^{'}-1}^{-1}}}}.\label{eq:200}
\end{align}
From the lipschitz continuity of the expected reward function and \textbf{Lemma 1}, we have
\begin{align}
f(A_{0},U_{t^{'}, 0})-f(A_{0},w^{*}_{0})
\le 2PC_{t^{'}-1}\sqrt{\sum\limits_{e\in\uppercase{A}_{0}}{\Vert{x}_{0,e}\Vert^{2}_{V_{t^{'}-1}^{-1}}}}\label{eq:th22}.
\end{align}
Since $\Vert{x}_{0,e}\Vert_{V_{t^{'}-1}^{-1}}\le\displaystyle\frac{\Vert{x}_{0,e}\Vert_{2}}{\sqrt{\lambda}}\le\sqrt{\displaystyle\frac{L}{\lambda}}$, combined with \eqref{eq:th22}, we have
\begin{align}
f(A_{0},U_{t^{'}, 0})&\le f(A_{0},w^{*}_{0})+2P\sqrt{\displaystyle\frac{|A_{0}|L}{\lambda}}C_{t^{'}-1}\nonumber\\
&\le \mu_{0}+2P\sqrt{\displaystyle\frac{KL}{\lambda}}C_{t^{'}-1}.\label{eq:201}
\end{align}
If $1-(1+n_{t^{'}-1})\alpha<0$, then from \eqref{eq:200} and \eqref{eq:201} we have
\begin{align}
&\alpha d_{t^{'}-1}\mu_{0}\nonumber\\
&<\left[1-(1+n_{t^{'}-1})\alpha\right]\mu_{0}-n_{t^{'}-1}\Delta_{min}\nonumber\\
  &\quad+2PC_{t^{'}-1}\sqrt{n_{t^{'}-1}\sum\limits_{n\in\uppercase{N}_{t^{'}-1}}\sum\limits_{e\in\uppercase{A}_{n}}{\Vert{x}_{n,e}\Vert^{2}_{V_{t^{'}-1}^{-1}}}},\nonumber\\
\end{align}
which is consistent with the case when $\mu_{0}$ is known. Thus, we have
\begin{align}
d_{T}<&1/\alpha+ \left(\alpha\mu_{0}+\Delta_{min}\right)\lambda d/\left(\alpha\mu_{0}KL\right)\nonumber\\
  &+\displaystyle\frac{4P^{2}C^{2}_{T-1}d}{3\left(\alpha\mu_{0}+\Delta_{min}\right)\alpha\mu_{0}},\nonumber
\end{align}
and
\begin{align}
R(T)< &2PC_{T}\sqrt{2dT\log\left(1+\displaystyle\frac{TKL}{d\lambda}\right)}\nonumber\\
  &\quad+\Big[1/\alpha+\left(\alpha\mu_{0}+\Delta_{min}\right)\lambda d/\left(\alpha\mu_{0}KL\right)\nonumber\\
  &\quad+\displaystyle\frac{4P^{2}C^{2}_{T-1}d}{3\left(\alpha\mu_{0}+\Delta_{min}\right)\alpha\mu_{0}}\Big]\Delta_{max}.\nonumber
\end{align}

If $1-(1+n_{t^{'}-1})\alpha\ge0$, then from \eqref{eq:200} and \eqref{eq:201} we have
\begin{align}
&\alpha d_{t^{'}-1}\mu_0\nonumber\\
&<\left(1-\alpha\right)\left(\mu_{0}+2P\sqrt{\displaystyle\frac{KL}{\lambda}}C_{T-1}\right)\nonumber\\
&\quad-n_{t^{'}-1}\left(\alpha\mu_{0}+\Delta_{min}+2\alpha P\sqrt{\displaystyle\frac{KL}{\lambda}}C_{T-1}\right)\nonumber\\
  &\quad+2PC_{t^{'}-1}\sqrt{n_{t^{'}-1}\sum\limits_{n\in\uppercase{N}_{t^{'}-1}}\sum\limits_{e\in\uppercase{A}_{n}}{\Vert{x}_{n,e}\Vert^{2}_{V_{t^{'}-1}^{-1}}}}\nonumber\\
&\le\left(1-\alpha\right)\left(\mu_{0}+2P\sqrt{\displaystyle\frac{KL}{\lambda}}C_{T-1}\right)\nonumber\\
&\quad-n_{t^{'}-1}\left(\alpha\mu_{0}+\Delta_{min}+2\alpha P\sqrt{\displaystyle\frac{KL}{\lambda}}C_{T-1}\right)\nonumber\\
  &\quad+2PC_{T-1}n_{t^{'}-1}\sqrt{\displaystyle\frac{KLd}{\lambda d + n_{t^{'}-1}KL}},\nonumber
\end{align}
where the second inequality follows from \textbf{Lemma 5}.

\noindent Denote by $f(x)=-AKLx+\displaystyle\frac{BKLx}{\sqrt{KLx+\lambda d}}$, where
$A=\displaystyle\frac{\alpha\mu_{0}+\Delta_{min}}{KL}+\displaystyle\frac{2\alpha P}{\sqrt{KL\lambda}}C_{T-1}$,
$B=2PC_{T-1}\sqrt{\displaystyle\frac{d}{KL}}$.
According to the proof of \textbf{Theorem 1}, we have $f(x)\le\uppercase{A}\lambda d+\displaystyle\frac{B^{2}}{3A}$.
Thus,
\begin{align}
 d_{T}&=d_{t^{'}-1}+1\nonumber\\
 &<1/\alpha+\displaystyle\frac{2\left(1-\alpha\right)PC_{T-1}}{\alpha\mu_{0}}\sqrt{\displaystyle\frac{KL}{\lambda}}\nonumber\\
 &\quad+\Big[\displaystyle\frac{\alpha\mu_{0}+\Delta_{min}}{\alpha\mu_{0}KL}+
 \displaystyle\frac{2\alpha P}{\alpha\mu_{0}\sqrt{KL\lambda}}C_{T-1}\Big]\lambda d\nonumber\\
 &\quad+\displaystyle\frac{4P^{2}C^{2}_{T-1}d}{3\left(\alpha\mu_{0}+\Delta_{min}+2\alpha P\sqrt{KL/\lambda}C_{T-1}\right)\alpha\mu_{0}}\label{eq:th221}.
\end{align}
After running \textbf{Algorithm 2} for $T$ rounds, the cumulative regret could be bounded as follows,
\begin{align}
  R(T)&=\sum\limits_{t=1}^{T}\left[f(A^{*}_{t},w^{*}_{t})-f(A_{t},w^{*}_{t})\right]\nonumber\\
  &\le\sum\limits_{t\in\uppercase{N}_{T}}\left[f(A_{t},\uppercase{U}_{t})-f(A_{t},w^{*}_{t})\right]+d_{T}\Delta_{max}\label{eq:th216}\\
  &< 2PC_{T}\sqrt{2dT\log\left(1+\displaystyle\frac{TKL}{d\lambda}\right)}\nonumber\\
  &\quad+\Big[1/\alpha+\displaystyle\frac{2\left(1-\alpha\right)PC_{T-1}}{\alpha\mu_{0}}\sqrt{\displaystyle\frac{KL}{\lambda}}\nonumber\\
 &\quad+\left(\displaystyle\frac{\alpha\mu_{0}+\Delta_{min}}{\alpha\mu_{0}KL}+
 \displaystyle\frac{2\alpha P}{\alpha\mu_{0}\sqrt{KL\lambda}}C_{T-1}\right)\lambda d\nonumber\\
 &\quad+\displaystyle\frac{4P^{2}C^{2}_{T-1}d}{3\left(\alpha\mu_{0}+\Delta_{min}+2\alpha P\sqrt{KL/\lambda}C_{T-1}\right)\alpha\mu_{0}}\Big]\Delta_{max},\label{eq:th217}
\end{align}
where \eqref{eq:th216} follows from the proof in \textbf{Theorem 1}, \eqref{eq:th217} follows from \textbf{Lemma 3} and \eqref{eq:th221}.

\noindent Furthermore, if $TK\le d$, then it implies that $n_{t^{'}}K\le d$.

\noindent If $1-(1+n_{t^{'}-1})\alpha<0$, the analysis is consistent with the case when $\mu_{0}$ is known.
Thus,
\begin{align}
 &d_{T}<\displaystyle\frac{1}{\alpha}+\displaystyle\frac{2Pd}{\alpha\mu_{0}}\sqrt{\displaystyle\frac{L}{K\left(\lambda+L\right)}}C_{T-1}.\label{eq:268}
\end{align}
If $1-(1+n_{t^{'}-1})\alpha\ge 0$, we apply \textbf{Lemma 4} and have
\begin{align}
 &\alpha d_{t^{'}-1}\mu_0\nonumber\\
&<\left(1-\alpha\right)\left(\mu_{0}+2P\sqrt{\displaystyle\frac{KL}{\lambda}}C_{T-1}\right)\nonumber\\
&\quad-n_{t^{'}-1}\left(\alpha\mu_{0}+\Delta_{min}+2\alpha P\sqrt{\displaystyle\frac{KL}{\lambda}}C_{T-1}\right)\nonumber\\
&\quad+2PC_{t^{'}-1}n_{t^{'}-1}\sqrt{K\left[1-(\displaystyle\frac{\lambda}{\lambda+n_{t^{'}-1}KL/d})^{d/(n_{t^{'}-1}K)}\right]}\nonumber\\
&\le\left(1-\alpha\right)\left(\mu_{0}+2P\sqrt{\displaystyle\frac{KL}{\lambda}}C_{T-1}\right)\nonumber\\
&\quad+2Pd\sqrt{\displaystyle\frac{L}{K\left(\lambda+L\right)}}C_{t^{'}-1}\nonumber.
\end{align}

\noindent Thus,
\begin{align}
 d_{T}=&d_{t^{'}-1}+1\nonumber\\
 <&1/\alpha+2P\left(1-\alpha\right)\sqrt{\displaystyle\frac{KL}{\lambda}}C_{T-1}/\left(\alpha\mu_{0}\right)\nonumber\\
  &\quad+\displaystyle\frac{2Pd}{\alpha\mu_{0}}\sqrt{\displaystyle\frac{L}{K\left(\lambda+L\right)}}C_{T-1}\label{eq:th2101}.
\end{align}

\noindent Combine \eqref{eq:th216}, \eqref{eq:268} and \eqref{eq:th2101} we have

\begin{align}
  R(T)< &2PC_{T}\sqrt{2dT\log\left(1+\displaystyle\frac{TKL}{d\lambda}\right)}\nonumber\\
  &\quad+\Big[1/\alpha+2P\left(1-\alpha\right)\sqrt{\displaystyle\frac{KL}{\lambda}}C_{T-1}/\left(\alpha\mu_{0}\right)\nonumber\\
  &\quad+\displaystyle\frac{2Pd}{\alpha\mu_{0}}\sqrt{\displaystyle\frac{L}{K\left(\lambda+L\right)}}C_{T-1}\Big]\Delta_{max}.\nonumber
\end{align}
\end{proof}

\end{appendix}


\end{document}